\def\##1\#{\begin{align}#1\end{align}}
\def\$#1\${\begin{align*}#1\end{align*}}
\newtheorem{remark}{Remark}
\newtheorem{thm}{Theorem}
\def\given{\,|\,}
\def\cS{{\mathcal{S}}}
\def\cD{{\mathcal{D}}}
\def\cA{{\mathcal{A}}}
\def\cN{{\mathcal{N}}}
\DeclareMathOperator*{\argmax}{argmax}
\DeclareMathOperator*{\argmin}{argmin}
\icmltitlerunning{Principled Exploration via Optimistic Bootstrapping and Backward Induction}
\begin{document}

\twocolumn[
\icmltitle{Principled Exploration via Optimistic Bootstrapping and Backward Induction}
% \icmltitle{Optimistic Bootstrapped DQN}

% It is OKAY to include author information, even for blind
% submissions: the style file will automatically remove it for you
% unless you've provided the [accepted] option to the icml2021
% package.

% List of affiliations: The first argument should be a (short)
% identifier you will use later to specify author affiliations
% Academic affiliations should list Department, University, City, Region, Country
% Industry affiliations should list Company, City, Region, Country

% You can specify symbols, otherwise they are numbered in order.
% Ideally, you should not use this facility. Affiliations will be numbered
% in order of appearance and this is the preferred way.

\icmlsetsymbol{equal}{*}

\begin{icmlauthorlist}
\icmlauthor{Chenjia Bai}{hit}
\icmlauthor{Lingxiao Wang}{nw}
\icmlauthor{Lei Han}{tencent}
\icmlauthor{Jianye Hao}{tju}
\icmlauthor{Animesh Garg}{uot}
\icmlauthor{Peng Liu}{hit}
\icmlauthor{Zhaoran Wang}{nw}
\end{icmlauthorlist}

\icmlaffiliation{hit}{Harbin Institute of Technology, Harbin, China}
\icmlaffiliation{nw}{Northwestern University, Evanston, USA}
\icmlaffiliation{tencent}{Tencent Robotics X}
\icmlaffiliation{tju}{Tianjin University}
\icmlaffiliation{uot}{University of Toronto, Vector Institute}
\icmlcorrespondingauthor{Chenjia Bai}{bai\_chenjia@stu.hit.edu.cn}

% You may provide any keywords that you
% find helpful for describing your paper; these are used to populate
% the "keywords" metadata in the PDF but will not be shown in the document
\icmlkeywords{}

\vskip 0.3in
]

% this must go after the closing bracket ] following \twocolumn[ ...

% This command actually creates the footnote in the first column
% listing the affiliations and the copyright notice.
% The command takes one argument, which is text to display at the start of the footnote.
% The \icmlEqualContribution command is standard text for equal contribution.
% Remove it (just {}) if you do not need this facility.

%\printAffiliationsAndNotice{}  % leave blank if no need to mention equal contribution
\printAffiliationsAndNotice{} % otherwise use the standard text.

\begin{abstract}
One principled approach for provably efficient exploration is incorporating the upper confidence bound (UCB) into the value function as a bonus. However, UCB is specified to deal with linear and tabular settings and is incompatible with Deep Reinforcement Learning (DRL). In this paper, we propose a principled exploration method for DRL through Optimistic Bootstrapping and Backward Induction (OB2I). OB2I constructs a general-purpose UCB-bonus through non-parametric bootstrap in DRL. The UCB-bonus estimates the epistemic uncertainty of state-action pairs for optimistic exploration. We build theoretical connections between the proposed UCB-bonus and the LSVI-UCB in a linear setting. We propagate future uncertainty in a time-consistent manner through episodic backward update, which exploits the theoretical advantage and empirically improves the sample-efficiency. Our experiments in the MNIST maze and Atari suite suggest that OB2I outperforms several state-of-the-art exploration approaches.
\end{abstract}

\section{Introduction}

In Reinforcement learning (RL)~\citep{sutton-2018}, an agent aims to maximize the long-term return by interacting with an unknown environment. 
% \sout{The agent takes actions according to the knowledge of experiences, which leads to the fundamental problem of the exploration-exploitation dilemma. An agent may choose the best decision given current information or acquire more information by exploring the poorly understood states and actions. Exploring the environment may sacrifice immediate rewards but potentially improves future performance. The exploration strategy is crucial for the RL agent to find the optimal policy.}
To find the optimal policy, the agent is required to sufficiently explore the unknown environment and exploit in depth along the optimal trajectory. Devising efficient exploration algorithms thus becomes an attractive topic in recent years of RL research. The theoretical achievements in RL offer various provably efficient exploration methods in tabular and linear Markov Decision Processes (MDPs) based on the fundamental value iteration algorithm Least-Squares Value Iteration (LSVI). Among these, \textit{optimism in the face of uncertainty}~\citep{auer-2007,jin-2018} is a principled approach for efficient exploration with well theoretical guarantees. In tabular cases, the optimism-based methods incorporate the Upper Confidence Bound~(UCB) into the value function as bonus and attain the optimal worst-case regret~\citep{minmax-2017,regret-2010,dann-2015}. Randomized value function based on posterior sampling chooses actions according to the randomly sampled statistically plausible value function and is known to achieve near-optimal worst-case and Bayesian regrets~\citep{osband-2017,russo-2019}. Recently, the theoretical analyses in tabular cases have been extended to linear MDPs where the transition and reward function are assumed to be linear. In linear cases, LSVI-UCB~\citep{jin-2019} has been demonstrated to enjoy a near-optimal worst-case regret using a provably efficient bonus. Randomized LSVI~\citep{zanette-2019} also obtains a near-optimal worst-case regret. 

Although the analyses in tabular and linear cases have induced attractive approaches for efficient exploration, it is still challenging in developing a practical exploration algorithm that is essentially suitable for Deep Reinforcement Learning (DRL)~\citep{DQN-2015}, which is necessary to achieve human-level performance in large-scale tasks such as Atari games and robotic tasks. A simple evidence is that, in linear case, the bonus in LSVI-UCB~\citep{jin-2019} and nontrivial noise in randomized LSVI~\citep{zanette-2019} are specifically designed for linear models~\citep{abbasi-2011}, without generalizations to fit powerful function approximations such as neural networks.   

In this paper, we propose a principled exploration method for DRL through Optimistic Bootstrapping and Backward Induction (OB2I). OB2I is an instantiation of LSVI-UCB~\citep{jin-2019} in DRL by using a general-purpose UCB-bonus to provide an optimistic $Q$-value and a randomized value function to perform temporally-extended exploration. This general-purpose UCB-bonus represents the disagreement of bootstrapped $Q$-functions~\citep{bootstrap-2016} to measure the epistemic uncertainty of the unknown optimal value function. Importantly, this proposed UCB-bonus can also be theoretically demonstrated to be equivalent to the bonus-term in LSVI-UCB~\citep{jin-2019}, when moving back in linear MDPs. In our case, the $Q$-value plus the general-purpose UCB-bonus is shown to be an optimistic $Q^+$ function that is higher than the $Q$-value for scarcely visited state-action pairs and remains close to the $Q$-value for frequently visited pairs. Furthermore, we propose an extension of the Episodic Backward Update (EBU) technique \citep{ebu-2019}, which we refer to as Backward Induction, to propagate future uncertainties to the estimated action-value function consistently within an episode. The Backward Induction exploits the theoretical advantage of LSVI-UCB and empirically improves the sample-efficiency of exploration significantly.

Compared to existing count-based and curiosity-driven exploration methods~\citep{bonux-2020}, OB2I enjoys the following benefits: (\romannumeral 1) we utilize intrinsic rewards to produce optimistic value function and also take advantage of bootstrapped $Q$-learning to perform temporally-consistent exploration, while existing methods do not combine these two principles; (\romannumeral 2) the generalized UCB-bonus measures the disagreement of bootstrapped $Q$-values, considering long-term uncertainty in an episode rather than the single-step uncertainty used in most bonus-based methods~\citep{disagree-2019,RND-2019}; (\romannumeral 3) we provide theoretical analysis showing that OB2I is consistent to LSVI-UCB in linear case; (\romannumeral 4) extensive evaluations show that OB2I outperforms several strong exploration approaches in the MNIST maze game and 49 Atari games.

\section{Background}

In this section, we review bootstrapped DQN~\citep{bootstrap-2016} and LSVI-UCB~\citep{jin-2019} that are closely related to the proposed OB2I method.

\subsection{Bootstrapped DQN}

Considering an MDP represented as $(\mathcal{S},\mathcal{A},T,\mathbb{P},r)$, where $T\in\mathbb{Z}_{+}$ is the episode length, $\mathcal{S}$ is the state space, $\mathcal{A}$ is the action space, $r$ is the reward function, and $\mathbb{P}$ is the unknown dynamics. In each timestep, the agent observes the current state $s_t$ and takes an action $a_t$, and then it receives a reward $r_t$ and the next state $s_{t+1}$. The action-value function $Q^{\pi}(s_t,a_t):=\mathbb{E}_{\pi}\big[\sum_{i=t}^{T-1}{\gamma^{i-t} r_i}]$ represents the expected cumulative reward starting from state $s_t$ by taking action $a_t$ and following policy $\pi(a_t|s_t)$ until the end of the episode. $\gamma\in[0,1)$ is the discount factor. The optimal value function $Q^*=\max_{\pi}Q^{\pi}$, and the optimal action $a^*=\argmax_{a\in \mathcal{A}}Q^*(s,a)$.

% Deep $Q$-Network (DQN) uses a deep neural network with parameters $\theta$ to approximate the $Q$-function. The loss function $L(\theta)=\mathbb{E}[(y_t-Q(s_t,a_t;\theta))^2\big|(s_t,a_t,r_t,s_{t+1})\sim\mathcal{D}]$, where $y_t=r_t+\gamma\max_{a'}Q(s_{t+1},a';\theta^{-})$ is the target value, and $\theta^{-}$ is the parameter of the target network. The agent accumulates experiences $(s_t,a_t,r_t,s_{t+1})$ in a replay buffer $\mathcal{D}$ and samples mini-batches in training.

Bootstrapped DQN~\citep{bootstrap-2016,bootstrap-2018} is a non-parametric posterior sampling method, which maintains $K$ estimations of $Q$-values to represent the posterior distribution of the randomized value function. Bootstrapped DQN uses a multi-head network with a shared representation and $K$ heads. Each head defines a $Q^k$-function. Bootstrapped DQN diversifies different $Q^k$ by using different random initialization and individual target networks. The loss for training $Q^k$ is
\begin{equation}\nonumber
L(\theta^k)\!=\!\mathbb{E}\Bigl[\bigl(r_t+\gamma\max_{a'}Q^k(s_{t+1},a';\theta^{k-})-Q^k(s_t,a_t;\theta^k)\bigr)^2\Bigr].
\end{equation}
The $k$-th head $Q^k(s,a;\theta^k)$ is trained with its own target network $Q^k(s,a;\theta^{k-})$ with slow-moving parameter $\theta^{k-}$. The agent follows a sampled head $Q^k$ to choose actions in an entire episode, which provides temporally-consistent exploration for DRL.

\subsection{LSVI-UCB}

\begin{algorithm}[t]
\small
\caption{LSVI-UCB in linear MDP}
\label{alg1}
\begin{algorithmic}[1]
\STATE {{\bf Initialize:} $\Lambda_t\leftarrow\lambda\cdot\mathbf{I}$ and $w_h\leftarrow 0$}
\FOR {episode $m=0$ {\bfseries to} $M-1$}
\STATE {Receive the initial state $s_0$}\label{alg:interact}
\FOR {step $t=0$ {\bfseries to} $T-1$}
\STATE {Take action $a_t=\arg\max_{a}Q_t(s_t,a)$ and observe $s_{t+1}$}
\ENDFOR\label{alg:end-interact}
\FOR {step $t=T-1$ {\bfseries to} $0$}\label{alg:train}
\STATE {$\Lambda_t\leftarrow\sum_{\tau=0}^{m}\phi(x_t^{\tau},a_t^{\tau})\phi(x_t^{\tau},a_t^{\tau})^\top+\lambda \cdot \mathrm{\mathbf{I}}$}
\STATE {$w_t\leftarrow\Lambda_t^{-1}\sum_{\tau=0}^{m}\phi(x_t^{\tau},a_t^{\tau})[r_t(x_t^{\tau},a_t^{\tau})+\max_a Q_{t+1}(x_{t+1}^{\tau},a)]$}
\STATE {$Q_t(\cdot,\cdot)\!=\!\min\{w_t^\top\phi(\cdot,\cdot)\!+\!\alpha[\phi(\cdot,\cdot)^\top\Lambda_t^{-1}\phi(\cdot,\cdot)]^{\nicefrac{1}{2}},T\}$}
\ENDFOR\label{alg:end-train}
\ENDFOR
\end{algorithmic}
\end{algorithm}

LSVI-UCB~\citep{jin-2019} uses an optimistic $Q$-value with LSVI in linear MDP. We denote the feature map of the state-action pair as $\phi:\mathcal{S}\times\mathcal{A}\rightarrow\mathbb{R}^d$. Furthermore, the transition kernel and reward function are assumed to be linear in $\phi$. The LSVI-UCB algorithm is shown in Algorithm~\ref{alg1}. For lines 3-6, the agent executes the policy to collect data in an episode. For lines 7-11, the parameter $w_t$ of $Q$-function is updated in closed-form by following the regularized least-squares problem as $w_t\leftarrow \argmin_{w\in\mathbb{R}^d}\sum\nolimits_{\tau=0}^{m}\bigl[r_t(s_t^\tau,a_t^\tau)+\max_{a\in\mathcal{A}}Q_{t+1}(s_{t+1}^{\tau},a)-w^{\top}\phi(s_t^{\tau},a_t^{\tau})\bigr]^2+\lambda\|w\|^2$,
where $m$ is the total number of episodes, and $\tau$ is the episodic index. The least-squares problem has the explicit solution $w_t=\Lambda_t^{-1}\sum_{\tau=0}^{m}\phi(x_t^{\tau},a_t^{\tau})\bigl[r_t(x_t^{\tau},a_t^{\tau})+\max_a Q_{t+1}(x_{t+1}^{\tau},a)\bigr]$ (line 9), where $\Lambda_t$ is the Gram matrix. The value function is estimated by $Q_t(s,a)\approx w_t^{\top}\phi(s,a)$. %The $\min(\cdot,T)$ (line~\ref{alg:bonus}) represents the $Q$-value is bounded in $[0,T]$ because the reward is bounded in $[0,1]$. 
LSVI-UCB uses an UCB-bonus \citep{abbasi-2011} in line 10
\begin{equation}\label{eq:lsvi-bonus} 
r^{\rm ucb}=[\phi(s,a)^\top\Lambda_t^{-1}\phi(s,a)]^{\nicefrac{1}{2}}
\end{equation}
to measure the uncertainty of state-action pairs. The term $u:=(\phi^\top\Lambda_t^{-1}\phi)^{-1}$ can be intuitively considered as a pseudo count of the state-action pair in the representation space of $\phi$. Thus, the bonus $r^{\rm ucb}=\nicefrac{1}{\sqrt{u}}$ represents the uncertainty along the direction of $\phi$. By adding the bonus to the $Q$-value, we obtain an optimistic value function $Q^+$, which serves as an upper bound of $Q$ to encourage exploration. The bonus in each step is propagated from the end of the episode by the backward update of the $Q$-value (lines~7-11), which follows the principle of dynamic programming. Theoretical analysis shows that LSVI-UCB achieves a near-optimal worst-case regret of $\tilde{\mathcal{O}}(\sqrt{d^3 T^3 L^3})$ with proper selection of $\alpha$ and $\lambda$, where $L$ is the total number of steps.

LSVI-UCB~\citep{jin-2019} has been demonstrated to be effective in principled exploration. Nevertheless, developing a practical exploration algorithm for DRL is challenging, since (\romannumeral 1) the UCB-bonus utilized by LSVI-UCB is specifically defined for linear MDPs, and (\romannumeral 2) LSVI-UCB utilizes backward update of $Q$-functions (lines 7-11 in Alg.~\ref{alg1}) to aggregate uncertainty. Although the backward update is a standard approach in theoretical analysis of sample-efficient exploration \citep{shani2020optimistic,cai2020provably,wang2019optimism}, such an approach is scarcely studied in developing practical exploration algorithm for DRL.

\section{Proposed Method}

OB2I solves the efficient exploration problem for DRL in the following directions:
\begin{itemize}[leftmargin=1.0em,itemsep=0.1em,topsep=0.1em]
\item we propose a general-purpose UCB-bonus for optimistic exploration. More specifically, we utilize bootstrapped DQN to construct a general-purpose UCB-bonus, which is theoretically consistent with LSVI-UCB for linear MDPs. We refer to \S~\ref{sec-ucb-bonus} for the details;
\item we integrate bootstrapped $Q$-functions and UCB-bonus into the backward update, which follows the principle of dynamic programming. More specifically, we extend Episodic Backward Update (EBU)~\citep{ebu-2019} from standard $Q$-learning to bootstrapped $Q$-learning, and we refer this extension to as Bootstrapped EBU (BEBU). We refer to \S~\ref{sec-epi-update} for the details.
\end{itemize}
% BEBU allows sample-efficient learning and fast propagation of the future uncertainty to the estimated $Q$-value consistently. 

\subsection{General-Purpose UCB-Bonus}\label{sec-ucb-bonus}

\begin{figure*}[t]
\centering
\subfigure[]{\includegraphics[width=1.8in]{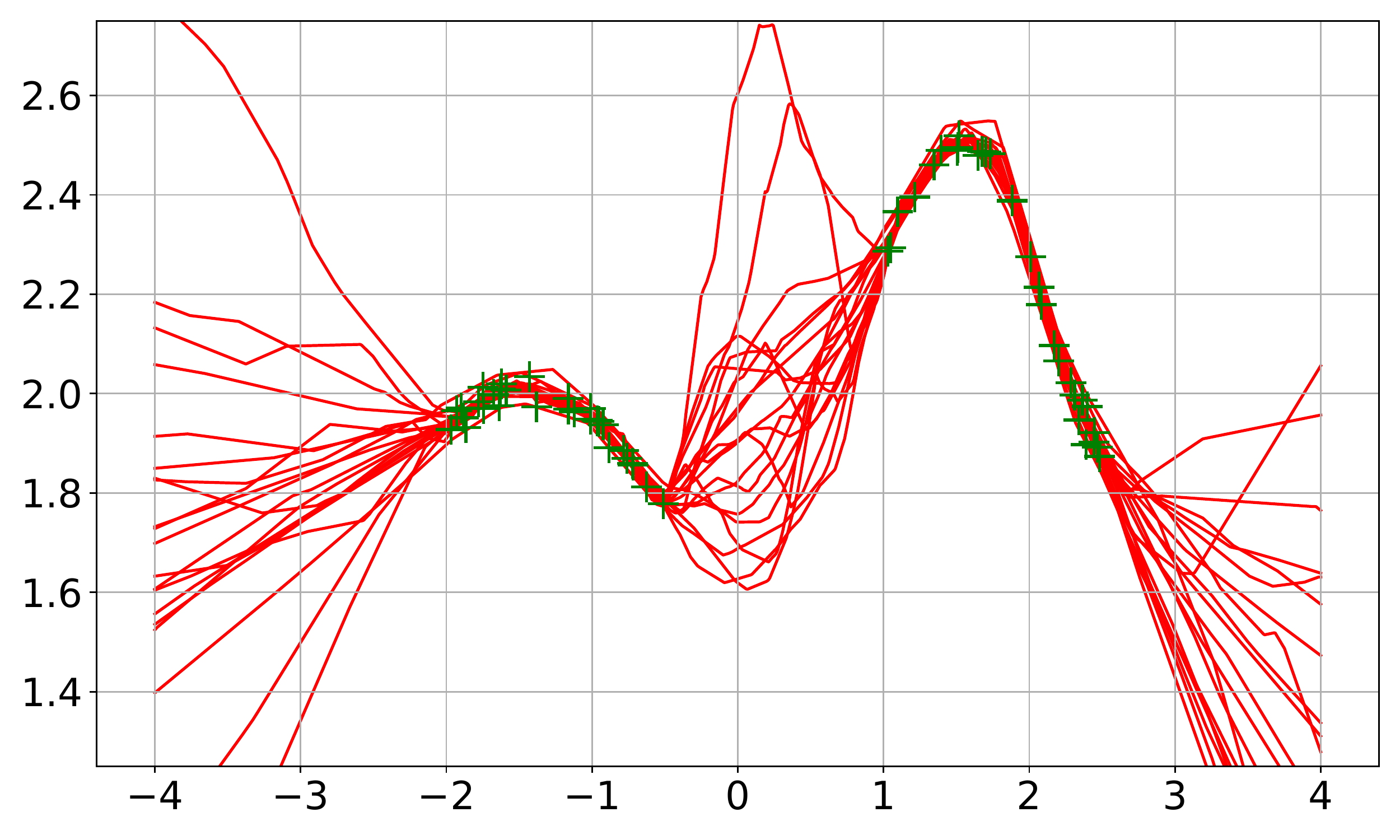}\label{fig:ucb-a}}\hspace{2em}
\subfigure[]{\includegraphics[width=1.8in]{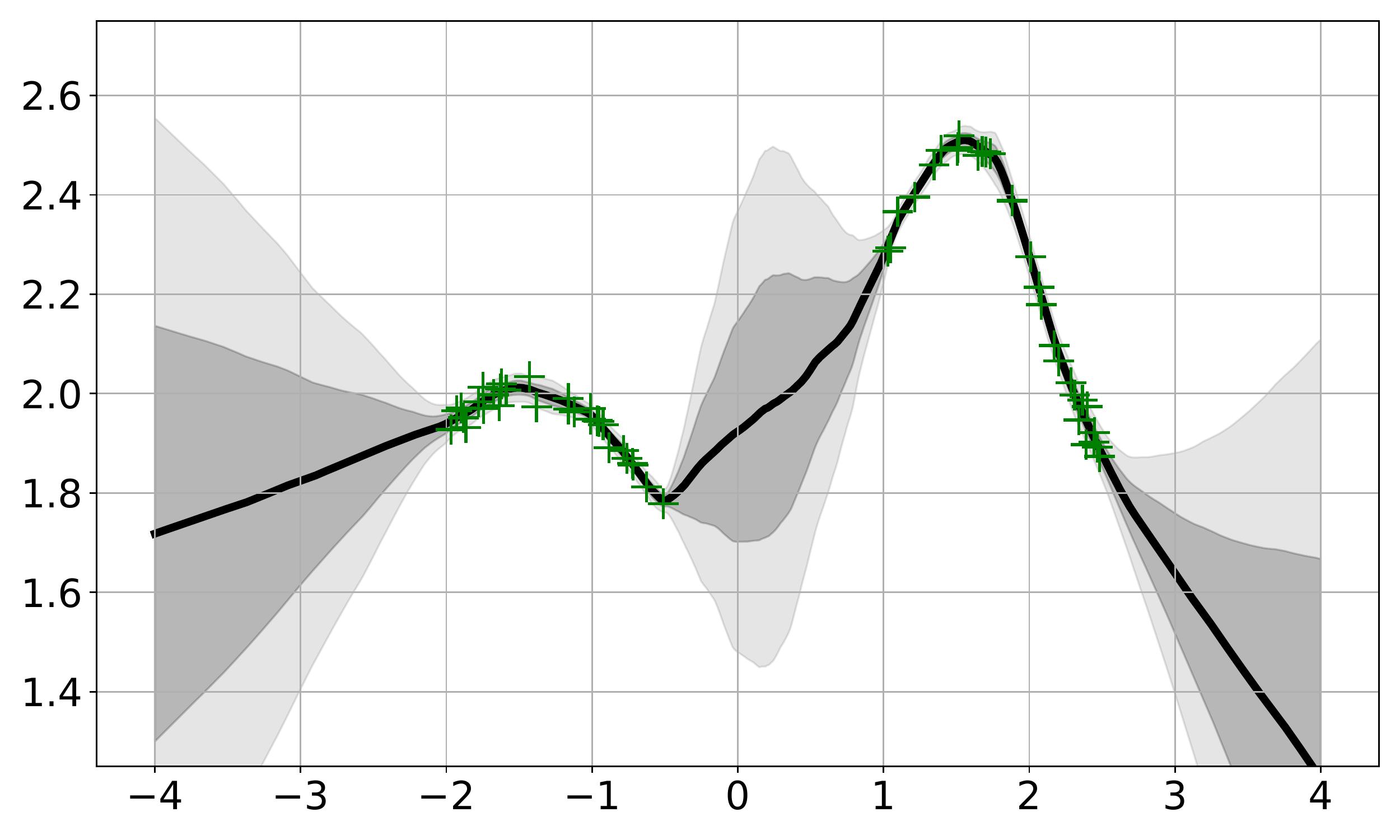}\label{fig:ucb-b}}\hspace{2em}
\subfigure[]{\includegraphics[width=1.8in]{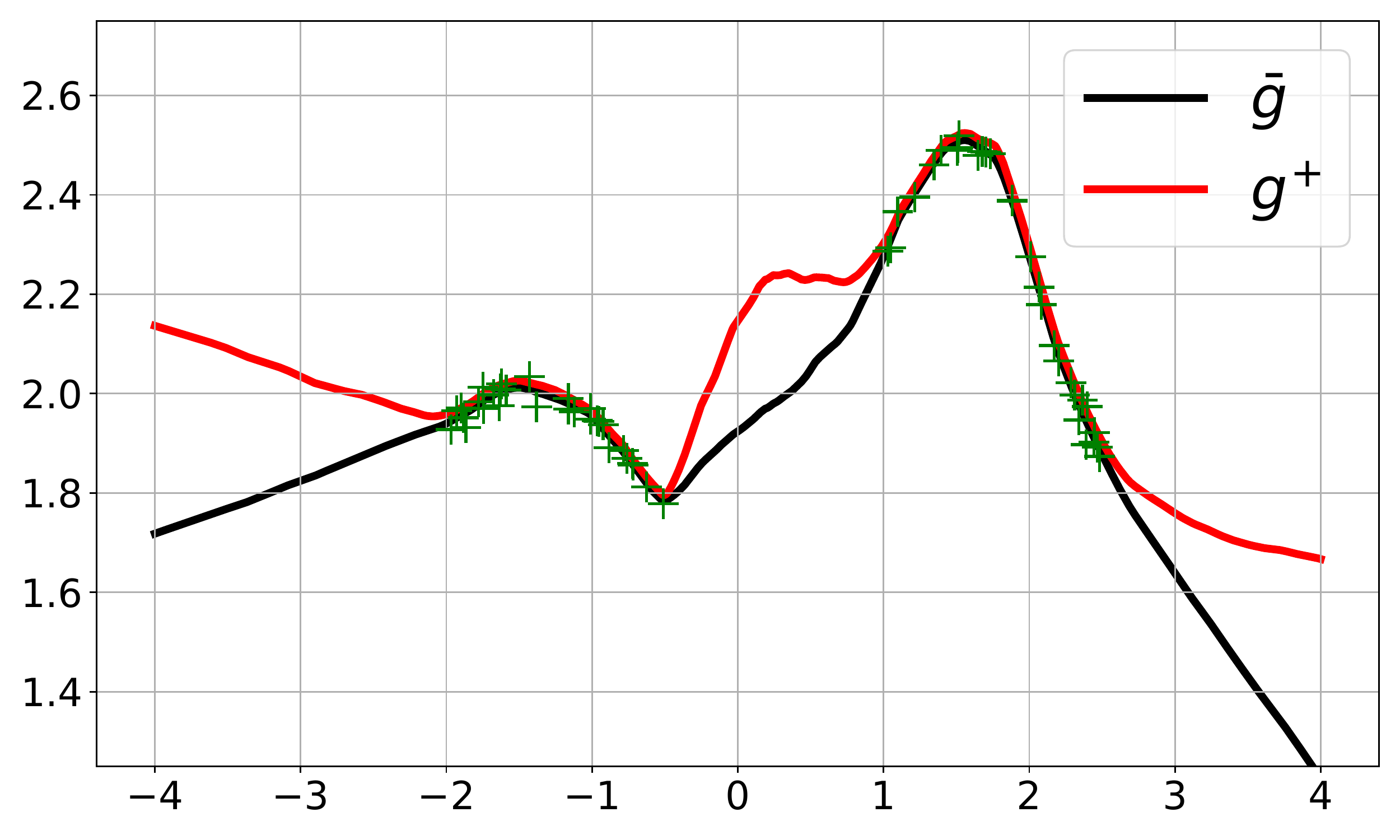}\label{fig:ucb-c}}
\caption{Illustration of the general-purpose UCB-bonus in a simple regression task. Green markers indicate there are 60 data points. (a) Regression curves of $20$ neural networks. (b) Mean estimation (black curve) and uncertainty measurement (shadow region). (c) The optimistic value (red) and mean value (black).}
\label{fig:ucb}
\end{figure*}

Optimistic exploration uses an optimistic action-value function $Q^{+}$ to encourage exploration by adding a bonus term to the standard $Q$-value. Thus $Q^+$ serves as an upper bound of the standard $Q$. The bonus term represents the epistemic uncertainty that results from lacking experiences of the corresponding states and actions. For DRL with deep $Q$ network, it is impractical to derive a closed-form optimistic bonus like~\eqref{eq:lsvi-bonus}. Instead, we propose a general-purpose UCB-bonus $\mathcal{B}(s_t,a_t)$ by measuring the disagreement of multiple bootstrapped $Q$-values $\{Q^k(s_t,a_t)\}_{k=1}^{K}$ of the state-action pair $(s_t,a_t)$ in a bootstrapped DQN. That is, 
\begin{equation}\label{eq:bonus}
\mathcal{B}(s_t,a_t):=\sqrt{\frac{1}{K}\sum_{k=1}^{K}\Bigl(Q^k(s_t,a_t)-\bar{Q}(s_t,a_t)\Bigr)^2},
\end{equation}
where $\bar{Q}(s_t,a_t)$ is the mean of the bootstrapped $Q$-values. A similar uncertainty measurement was used in \citet{ucb-2017}. We discuss the difference between \citet{ucb-2017} and our algorithm in \S\ref{sec:related-work}. We surprisingly find that this simple form in \eqref{eq:bonus} is also provably efficient for linear MDPs. Indeed, the following theorem establishes the connection between the general-purpose UCB-bonus defined in \eqref{eq:bonus} and the bonus in LSVI-UCB defined in \eqref{eq:lsvi-bonus}.

\begin{thm}%[Simplification of Theorem \ref{thm::var}]
\label{thm::var_informal}
In linear MDPs, the UCB-bonus $\mathcal{B}(s_t,a_t)$ in OB2I is equivalent to the bonus-term $[\phi_t^\top\Lambda_t^{-1}\phi_t]^{\nicefrac{1}{2}}$ in LSVI-UCB, where $\Lambda_t\leftarrow\sum_{\tau=0}^{m}\phi(x_t^{\tau},a_t^{\tau})\phi(x_t^{\tau},a_t^{\tau})^\top+\lambda \cdot \mathrm{\mathbf{I}}$, and $m$ is the current episode. 
\end{thm}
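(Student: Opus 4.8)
The plan is to exploit the defining property of a linear MDP — that the action-value function is linear in the feature map $\phi$ — so that each bootstrapped head reduces to a linear predictor, and then to recognize the disagreement in \eqref{eq:bonus} as a bootstrap (sample) estimate of the predictive variance of a regularized least-squares estimator. Concretely, in a linear MDP the $k$-th head can be written as $Q^k(s_t,a_t)=(w_t^k)^\top\phi(s_t,a_t)$, where each weight vector $w_t^k$ solves the same regularized least-squares problem as in Algorithm~\ref{alg1}, but on a bootstrapped (reweighted or target-perturbed) version of the transitions collected through episode $m$. The first step is therefore to write each head in closed form, $w_t^k=\Lambda_t^{-1}\sum_{\tau=0}^{m}\phi(x_t^\tau,a_t^\tau)\big(y_t^\tau+\xi_t^{\tau,k}\big)$, where $y_t^\tau=r_t(x_t^\tau,a_t^\tau)+\max_a Q_{t+1}(x_{t+1}^\tau,a)$ is the shared regression target and $\xi_t^{\tau,k}$ is the zero-mean perturbation of per-sample variance $\sigma^2$ that distinguishes head $k$. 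Averaging over heads recovers the LSVI-UCB weight $w_t$ of line~9.

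Second, I would compute the empirical covariance of the heads. Since $Q^k-\bar Q=(w_t^k-\bar w_t)^\top\phi$, the squared bonus factorizes as
\begin{equation}\nonumber
\mathcal{B}(s_t,a_t)^2=\phi^\top\Big[\tfrac{1}{K}\sum_{k=1}^{K}(w_t^k-\bar w_t)(w_t^k-\bar w_t)^\top\Big]\phi=:\phi^\top\widehat\Sigma\,\phi.
\end{equation}
The key identity to establish is that the head covariance $\widehat\Sigma$ concentrates (as $K\to\infty$) on $\sigma^2\,\Lambda_t^{-1}\big(\Lambda_t-\lambda\mathbf{I}\big)\Lambda_t^{-1}$: plugging in the closed form gives $w_t^k-\bar w_t=\Lambda_t^{-1}\sum_\tau\phi(x_t^\tau,a_t^\tau)\,\xi_t^{\tau,k}$, and taking the expectation of its outer product over the independent, zero-mean perturbations leaves exactly $\sigma^2\Lambda_t^{-1}\big(\sum_\tau\phi\phi^\top\big)\Lambda_t^{-1}=\sigma^2\Lambda_t^{-1}(\Lambda_t-\lambda\mathbf{I})\Lambda_t^{-1}$. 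In the regime $\lambda\to 0$ (or after absorbing the $\mathcal{O}(\lambda)$ correction), this collapses to $\sigma^2\Lambda_t^{-1}$, so that $\mathcal{B}(s_t,a_t)=\sigma\,[\phi_t^\top\Lambda_t^{-1}\phi_t]^{\nicefrac{1}{2}}$, i.e. the LSVI-UCB bonus \eqref{eq:lsvi-bonus} up to the constant scale $\sigma$, which can be folded into the coefficient $\alpha$.

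The main obstacle is justifying the covariance identity for the actual bootstrap mechanism rather than the idealized additive-noise surrogate. If the heads are trained by resampling/masking the replay data (the standard bootstrapped-DQN scheme) instead of by perturbing the targets, the Gram matrix itself fluctuates across heads, so $w_t^k$ is no longer linear in the bootstrap weights; one must then linearize via an influence-function / delta-method expansion around the full-data estimator and verify that the leading-order term reproduces the same $\sigma^2\Lambda_t^{-1}$ structure, with the weight variance now playing the role of $\sigma^2$. Two further points need care: (i) the finite-$K$ gap between the sample covariance $\widehat\Sigma$ and its population limit, which I would control by a standard concentration argument and then state the equivalence in the large-$K$ limit; and (ii) the regularizer $\lambda\mathbf{I}$, which makes the match exact only as $\lambda\to0$ and otherwise contributes a controllable lower-order term. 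Assembling these pieces yields the claimed equivalence between \eqref{eq:bonus} and \eqref{eq:lsvi-bonus}.
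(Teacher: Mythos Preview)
Your proposal is correct but follows a genuinely different route from the paper. The paper does not analyze the bootstrap mechanism directly; instead it adopts a Bayesian linear regression viewpoint: place a Gaussian prior $w\sim\mathcal{N}(0,\mathbf{I}/\lambda)$ on the weight, assume the regression noise is standard Gaussian, and compute the posterior $w\mid\mathcal{D}_m\sim\mathcal{N}(\mu_t,\Lambda_t^{-1})$ by a short Bayes-rule calculation, so that $\mathrm{Var}(\phi^\top w_t)=\phi^\top\Lambda_t^{-1}\phi$ holds \emph{exactly} for every $\lambda>0$. The link to the empirical bootstrap is then invoked only as a side remark (bootstrap $\approx$ posterior under an uninformative prior, citing \citet{friedman2001elements}), not derived.

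Your frequentist computation is more faithful to what the ensemble actually does, but it pays a price you already flag: with only target perturbations the head covariance is $\sigma^2\Lambda_t^{-1}(\Lambda_t-\lambda\mathbf{I})\Lambda_t^{-1}$, which matches $\Lambda_t^{-1}$ only as $\lambda\to 0$. The paper's Bayesian framing avoids this because the prior precision $\lambda\mathbf{I}$ is absorbed directly into the posterior precision $\Lambda_t$, giving an exact identity rather than a leading-order one. If you wanted your direct route to match at finite $\lambda$, you would additionally perturb the prior mean (an independent $\mathcal{N}(0,\mathbf{I}/\lambda)$ draw per head inside the regularizer), which is the standard randomize-then-optimize construction that samples the exact Gaussian posterior. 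Your delta-method treatment of the resampling bootstrap and the finite-$K$ concentration step are sound additions but go beyond what the paper actually proves.
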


In Theorem \ref{thm::var_informal}, we cast the variance that defines the UCB-bonus of OB2I as the posterior variance of value functions under the Bayesian learning regime. We remark that the bootstrapped distribution of value functions coincides with the posterior under a Bayesian setting where the prior is uninformative \citep{friedman2001elements}. We refer to Appendix~\ref{app-proof} for the details and complete statement. Theorem~\ref{thm::var_informal} shows that the general-purpose UCB-bonus in \eqref{eq:bonus} is provably efficient and equivalent to bonus-term in LSVI-UCB for linear cases. Importantly, \eqref{eq:bonus} is a general form for arbitrary $Q$ functions such as deep neural networks.

% \begin{thm}[Informal Version of Theorem \ref{thm::var}]
% \label{thm::var_informal}
% In linear function approximation, the UCB-bonus $\mathcal{B}(s_t,a_t)$ in OB2I is equivalent to the bonus-term $[\phi_t^\top\Lambda_t^{-1}\phi_t]^{\nicefrac{1}{2}}$ in LSVI-UCB, where $\Lambda_t\leftarrow\sum_{\tau=0}^{m}\phi(x_t^{\tau},a_t^{\tau})\phi(x_t^{\tau},a_t^{\tau})^\top+\lambda \cdot \mathrm{\mathbf{I}}$, and $m$ is the current episode.  
% \end{thm}
% We refer to Appendix \ref{app-proof} for the detailed proofs. 

Overall, for general DRL problem, using the UCB-bonus $\mathcal{B}(s_t,a_t)$ in \eqref{eq:bonus} is desirable for the following reasons.
\begin{itemize}[leftmargin=1.0em,itemsep=0.1em,topsep=0.1em]
\item Bootstrapped DQN is a non-parametric posterior sampling method, that is naturally compatible with deep neural networks \citep{osband-2019}.
\item $\mathcal{B}(s_t,a_t)$ quantifies the epistemic uncertainty of $(s_t,a_t)$. Due to the non-convexity nature of optimizing neural network and independency of random initialization, if $(s_t, a_t)$ is scarcely visited, $\mathcal{B}(s_t,a_t)$ obtained via bootstrapped $Q$-values will tend to be large. Moreover, $\mathcal{B}(s_t,a_t)$ converges to zero asymptotically as the samples increases to infinity. 
\item $\mathcal{B}(s_t,a_t)$ is computed for batch data sampled from experience replay. This is more efficient than other optimistic methods that change the action-selection scheme in each timestep~\citep{ucb-2017,info-2019} to choose optimistic actions based on uncertainty estimation or information-directed sampling.
\end{itemize}
The optimistic $Q^{+}$ is obtained by summing up $\mathcal{B}(s_t,a_t)$ and the estimated $Q$-function, which takes the form as
\begin{equation}
Q^{+}(s_t,a_t):=Q(s_t,a_t)+\alpha\mathcal{B}(s_t,a_t),
\end{equation}
where $\alpha$ is a tuning parameter. We use a simple regression task with neural networks to illustrate the proposed UCB-bonus, as shown in Figure~\ref{fig:ucb}. We use 20 neural networks with the same network architecture to solve the same regression problem. 
% Each network contains three residual blocks with two fully-connected layers each. 
According to~\citet{bootstrap-2016}, the differences among the outcomes of fitting the 20 neural networks is a result of random initializations. For a given input $x$, the networks yield different estimations $\{g_i(x)\}_{i=1}^{20}$. It follows from Figure~\ref{fig:ucb-a} that the estimations $\{g_i(x)\}_{i = 1}^{20}$ behave similar in the region with large amount of observations, resulting in small disagreement of the estimations. However, for regions with less observations, the disagreement of the estimations inflates a lot. In Figure~\ref{fig:ucb-b}, we illustrate the confidence bound of the regression results $\bar{g}(x)\pm \tilde{\sigma}(g_i(x))$ and $\bar{g}(x)\pm 2\tilde{\sigma}(g_i(x))$, where $\bar{g}(x)$ and $\tilde{\sigma}(g_i(x))$ are the mean and standard deviation of the estimations. The standard deviation $\tilde{\sigma}(g_i(x))$ captures the epistemic uncertainty of regression results. Figure~\ref{fig:ucb-c} shows the optimistic estimation $g^{+}(x)=\bar{g}(x)+\tilde{\sigma}(g_i(x))$ plus the standard deviation. Clearly, the optimistic estimation $g^{+}$ is close to $\bar{g}$ in the region with dense observations, and it is larger than $\bar{g}$ in the region with fewer observations. 

In DRL, the bootstrapped $Q$-functions $\{Q^k(s_t,a_t)\}_{k=1}^{K}$, estimated by fitting the target $Q$-function, perform similarly as $\{g_i(x)\}_{i=1}^{20}$ in the above regression task. A higher UCB-bonus $\mathcal{B}(s_t,a_t):=\tilde{\sigma}(Q^k(s_t,a_t))$ indicates a higher epistemic uncertainty of the action-value function with $(s_t,a_t)$. Therefore, $Q^{+}$ produces optimistic estimation for novel state-action pairs and behaves similar to the $Q$-function in areas that are well explored by the agent. Hence, the optimistic estimation $Q^{+}$ encourages the agent to explore the potentially informative state-action pairs efficiently.

\subsection{Backward Induction of Uncertainty}\label{sec-epi-update}

OB2I adopts BEBU for backward induction when updating the action-value function. BEBU collects a complete trajectory from the replay buffer for each update. Such an approach allows OB2I to infer the long-term effect in an episode for decision making. In contrast, DQN and Bootstrapped DQN sample one-step transitions, which loses the information containing long-term effects.

It has to be mentioned that BEBU is required to propagate future uncertainty to the estimated action-value function consistently via UCB-bonus. For instance, let $t_2>t_1$ be indices of two steps in an episode. If $Q_{t_2}$ updates after that of $Q_{t_1}$, then the uncertainty propagated to $Q_{t_1}$ is inconsistent with that propagated to $Q_{t_2}$. 
% \lxhanc{requirement instead of an intuition}

To integrate the general-purpose UCB-bonus into bootstrapped $Q$-learning, we propose a novel $Q$-target by adding the bonus in both the immediate reward and the next-$Q$ value. The proposed $Q$-target needs to be suitable for BEBU in training. Formally, the $Q$-target for updating $Q^k$ is defined as 
\begin{equation}\label{eq:update-ucb}
\begin{split}
\!y&_t^k:=\bigl[r(s_t,a_t)+\alpha_1\mathcal{B}(s_t,a_t;\theta)\bigr]+\gamma\\
&\bigl[Q^k(s_{t+1},a';\theta^{k-})+\alpha_2\mathds{1}_{a'\neq a_{t+1}}\mathcal{\tilde{B}}^k (s_{t+1},a';\theta^{-})\bigr],
\end{split}
\end{equation}
where $a'=\argmax_a Q^k(s_{t+1},a;\theta^{k-})$. 
% \lxhanc{(Explain $\mathcal{\tilde{B}}$, $\theta^{k-}$, $\theta^{-}$.)} 
The choice of $a'$ is determined by the target $Q$-value without considering the bonus. The immediate reward is added by $\mathcal{B}(s_t,a_t;\theta)$ with a factor $\alpha_1$, where the bonus $\mathcal{B}$ is computed by bootstrapped $Q$-network with parameter $\theta$. The next-$Q$ value is added by $\mathds{1}_{a'\neq a_{t+1}} \mathcal{\tilde{B}}^k (s_{t+1},a';\theta^{-})$ with factor $\alpha_2$, where the bonus $\mathcal{\tilde{B}}^k$ is computed by the target network with parameter $\theta^{-}$. We assign different bonus $\mathcal{\tilde{B}}^k$ of next-$Q$ value to different heads, since the choices of $a'$ are different among the heads. Meanwhile, we assign the same bonus $\mathcal{B}$ of immediate reward for all the heads. We introduce an indicator function $\mathds{1}_{a'\neq a_{t+1}}$ to control backward update of $Q$-values. More specifically, in the $t$-th step, the action-value function $Q^k$ is updated optimistically at the state-action pair $(s_{t+1}, a_{t+1})$ due to the backward update. Thus, we ignore the bonus of next-$Q$ value in the update of $Q^k$ when $a'$ is equal to $a_{t+1}$.

% and perform update in episodes rather than uniformly sampled transitions. 
We use an example to illustrate the process of backward update. We store and sample the episodic experiences in a replay buffer. Considering an episode containing three time steps, $(s_0,a_0)\rightarrow (s_1,a_1)\rightarrow (s_2,a_2)$. We thus update the $Q$-value in the head $k$ in the backward manner, namely $Q(s_2,a_2)\rightarrow Q(s_1,a_1)\rightarrow Q(s_0,a_0)$ from the end of the episode. We describe the process as follows,
\begin{itemize}[leftmargin=1.0em,itemsep=0.1em,topsep=0.1em]
\item first, we update $Q(s_2,a_2)\leftarrow r(s_2,a_2)+\alpha_1\mathcal{B}(s_2,a_2)$. Note that in the last time step, we do not need to consider the next-$Q$ value;
\item then, we have $Q(s_1,a_1)\leftarrow [r(s_1,a_1)+\alpha_1\mathcal{B}(s_1,a_1)]+
[Q(s_2,a')+\alpha_2\mathds{1}_{a'\neq a_2}\mathcal{\tilde{B}}(s_2,a')]$ by following \eqref{eq:update-ucb}, where $a'=\argmax_a Q(s_2,a)$. Since $Q(s_2,a_2)$ is updated optimistically in the first step, we ignore the bonus-term $\mathcal{\tilde{B}}$ in next-$Q$ value when $a'=a_2$. The UCB-bonus is augmented by adding $\mathcal{B}$ and $\mathcal{\tilde{B}}$ to the immediate reward and next-$Q$ value, respectively;
\item as for $Q(s_0,a_0)$, its update follows the same principle. The optimistic $Q$-value is $Q(s_0,a_0)\leftarrow [r(s_0,a_0)+\alpha_1\mathcal{B}(s_0,a_0)]+[Q(s_1,a')+\alpha_2\mathds{1}_{a'\neq a_1}\mathcal{\tilde{B}}(s_1,a')]$, where $a'=\argmax_a Q(s_1,a)$.
\end{itemize}

In practice, the episodic update typically leads to instability in DRL due to strong correlation in consecutive transitions. Hence, we propose a diffusion factor $\beta\in[0,1]$ in BEBU to prevent such instability as that used in ~\citet{ebu-2019}. The $Q$-value is therefore computed as the weighted sum of the current value and the back-propagated estimation scaled with factor $\beta$. We consider an episodic experience that contains $T$ transitions, denoted by $E=\{\mathbf{S},\mathbf{A},\mathbf{R},\mathbf{S'}\}$, where $\mathbf{S}=\{s_0,\ldots,s_{T-1}\}$, $\mathbf{A}=\{a_0,\ldots,a_{T-1}\}$, $\mathbf{R}=\{r_0,\ldots,r_{T-1}\}$ and $\mathbf{S}'=\{s_1,\ldots,s_T\}$. We initialize a $Q$-table $\mathbf{\tilde{Q}}\in\mathbb{R}^{K\times|\mathcal{A}|\times T}$ by $Q(\cdot;\theta^{-})$ to store the next-$Q$ values of all the next states $\mathbf{S}'$ and valid actions for $K$ heads. We initialize $\mathbf{y}\in\mathbb{R}^{K\times T}$ to store the $Q$-target for $K$ heads and $T$ steps. We use bootstrapped $Q$-network with parameters $\theta$ to compute the bonus $\mathbf{B}=[\mathcal{B}(s_0,a_0),\ldots,\mathcal{B}(s_{T-1},a_{T-1})]$ for immediate reward, and use the target network with parameters $\theta^{-}$ to compute bonus $\mathbf{\tilde{B}}^k=[\mathcal{\tilde{B}}^k(s_1,a'_1),\ldots,\mathcal{\tilde{B}}^k(s_{T},a'_{T})]$ for next-$Q$ value in each head, where $a'_t=\argmax_a Q^k(s_t,a;\theta^{k-})$. The bonus vector $\mathbf{B}\in\mathbb{R}^{T}$ is the same for all $Q$-heads, while $\mathbf{\tilde{B}}\in\mathbb{R}^{K\times T}$ contains different values for different heads because the choices of $a'_t$ are different.

In the training of head $k$, we initialize the $Q$-target in the last step by $\mathbf{y}[k,T-1]=\mathbf{R}_{T-1}+\alpha_1\mathbf{B}_{T-1}$. We then perform a recursive backward update to get all $Q$-target values. The elements of $\mathbf{\tilde{Q}}[k,a_{t+1},t]$ for step $t$ in head $k$ is updated by using its corresponding $Q$-target $\mathbf{y}[k,t+1]$ with the diffusion factor as follows, 
% \lxhanc{(Usage of brackets.)}
\begin{equation}
\mathbf{\tilde{Q}}[k,a_{t+1},t]\leftarrow \beta\mathbf{y}[k,t+1]+(1-\beta)\mathbf{\tilde{Q}}[k,a_{t+1},t].
\end{equation}
Then, we update $\mathbf{y}[k,t]$ in the previous time step based on the newly updated $t$-th column of $\mathbf{\tilde{Q}}[k]$ as follows,
\begin{equation}
\mathbf{y}[k,t]\leftarrow \bigl(\mathbf{R}_t+\alpha_1\mathbf{B}_t\bigr)+\gamma\bigl(\mathbf{\tilde{Q}}[k,a',t]+\alpha_2\mathds{1}_{a'\neq a_{t+1}}\mathbf{\tilde{B}}[k,t]\bigr),
\end{equation}
where $a'=\argmax_{a} \mathbf{\tilde{Q}}[k,a,t]$. In practice, we construct a matrix $\mathbf{\tilde{A}}=\argmax_{a} \mathbf{\tilde{Q}}[\cdot,a,\cdot]\in\mathbb{R}^{K\times T}$ to gather all the actions $a'$ that correspond to the next-$Q$, and then construct a mask matrix $\mathbf{M}\in \mathbb{R}^{K\times T}$ to store the information whether $\mathbf{\tilde{A}}$ is identical to the executed action in the corresponding timestep or not. The bonus of next-$Q$ is the element-wise product of $\mathbf{M}$ and $\mathbf{\tilde{B}}$ with factor $\alpha_2$. After the backward update, we compute the $Q$-value of $(\mathbf{S},\mathbf{A})$ as $\mathbf{Q}=Q(\mathbf{S},\mathbf{A};\theta)\in\mathbb{R}^{K\times T}$. The loss function takes the form of $L(\theta)=\mathbb{E}\bigl[(\mathbf{y}-\mathbf{Q})^2|(s_t,a_t,r_t,s_{t+1})\in E,~E\sim\mathcal{D}\bigr]$, where the episodic experience $E$ is sampled from replay buffer to perform gradient descent. The gradients of all heads can be computed simultaneously via BEBU. We refer the full algorithm of OB2I to Appendix~\ref{app-sec-alg}.

To summarize, we use BEBU to propagate the future uncertainty in an episode, which is an extension of EBU~\citep{ebu-2019}. Compared to EBU, BEBU requires extra tensors to store the UCB-bonus for immediate reward and next-$Q$ value, which are integrated to propagate uncertainties. Meanwhile, integrating uncertainty into BEBU needs special design by using the mask. The previous works \citep{ucb-2017,sunrise-2020} do not propagate the future uncertainty and, therefore, does not capture the core benefit of utilizing UCB-bonus for the exploration of MDPs. We highlight that OB2I propagates future uncertainty in a time-consistent manner based on BEBU, which exploits the theoretical analysis established by~\citet{jin-2019}. Only in this way, $Q^+$ incorporates the epistemic uncertainty across {\it multiple steps}, so that the greedy action with respect to $Q^+$ (at the decision stage) performs deep exploration. In contrast, separating the bonus function from the bootstrapping process (i.e., only using it at the decision stage) fails to propagate uncertainty. The backward update also empirically improves the sample-efficiency significantly by allowing bonuses and delayed rewards to propagate through transitions of a complete episode.

\subsection{Comparison with LSVI-UCB}

% Theoretically, both LSVI-UCB and OB2I are frequentist approaches \lxhanc{(Give a ref. here)}. \sout{(\romannumeral 1)} OB2I is based on Bootstrapped DQN, a non-parametric bootstrapped approach, to construct the confidence interval of value functions. The output of $Q$-values coincides with the posterior under a Bayesian setting where the prior is uninformative \citep{friedman2001elements}. The bootstrapped distribution converges asymptotically to the uninformative posterior as the sample size increases. \sout{(\romannumeral 2)} LSVI-UCB constructs the confidence interval explicitly based on the linear model, which can be recovered by the bootstrapped approaches. Both Bootstrapped DQN and LSVI-UCB construct valid confidence intervals for the value functions. \sout{(\romannumeral 3)} In OB2I, we adopt Bootstrapped $Q$-values to calculate the standard deviation of $Q$-functions with neural network parameterization, which coincides with the bonus in LSVI-UCB on linear MDPs. In Theorem~\ref{thm::var_informal}, we use Bayesian setting as a simplification to motivate our algorithm \sout{. We remark that our algorithm does not require such a setting} \lxhan{while this is not necessary}. \sout{A similar motivation} \lxhan{An recent} approach \sout{also arises in the recent work on} \lxhan{also uses a similar way to analyze} the worst-case regret of randomized value functions \sout{approaches} \citep{russo-2019}.

We remark that both LSVI-UCB and OB2I constructs the confidence interval of value functions based on the frequentist approaches. Specifically, LSVI-UCB constructs the confidence intervals explicitly based on the linear model, whereas OB2I constructs the confidence interval based on the non-parametric bootstrapped approach. In OB2I, we adopt Bootstrapped $Q$-values to calculate the standard deviation of $Q$-functions with neural network parameterization, which coincides with the bonus in LSVI-UCB on linear MDPs. When the sample size increases, the distribution of bootstrapped $Q$-values converges asymptotically to the posterior under a Bayesian setting where the prior is uninformative \citep{friedman2001elements}. Hence, in Theorem~\ref{thm::var_informal}, we use the Bayesian setting as a simplification to motivate our algorithm while this is not necessary. A recent approach also uses a similar way to motivate the worst-case regret of randomized value functions \citep{russo-2019}.

From an empirical perspective, LSVI-UCB requires strict linear assumption in the transition dynamics and value function. To the opposite, OB2I uses a non-parametric form and the general UCB-bonus works for arbitrary $Q$-function types such as deep neural networks. In OB2I, the neural networks can be updated by gradient descent using batch episodic trajectories sampled from the replay buffer in each training step. However, in LSVI-UCB, all historical samples have to be used to update the $Q$-function and calculate the confidence bonus in each training step, since the posterior matrix $\Lambda$ relies on the update-to-date representation $\phi$ which is varies as the training proceeds. As a consequence, OB2I is much more sample-efficient empirically. Moreover, in LSVI-UCB, the target $Q$-function is updated in each iteration, whereas in OB2I, the target-network is updated less frequent. Similar empirical tricks are commonly used in most existing off-policy DRL algorithms \citep{bootstrap-2016,ddpg-2015,td3-2018}.

\section{Related Work}\label{sec:related-work}
We discuss a number of closely related approaches in this section and choose the most important ones to compare in our experiment.
One practical principle for exploration in DRL is maintaining the epistemic uncertainty.
% of action-value functions and learning to reduce the uncertainty. 
Epistemic uncertainty comes from the unawareness of the environments, and it decreases as the exploration proceeds.
% appears due to the missing knowledge of the environment, and disappears with the progress of exploration. 
Bootstrapped DQN~\citep{bootstrap-2016,bootstrap-2018} samples $Q$-values from the randomized value functions to encourage exploration through Thompson sampling. \citet{ucb-2017} proposes to use the standard-deviation of bootstrapped Q-functions to measure the uncertainty. Although the uncertainty measurement is similar to that of OB2I, our method is different from \citet{ucb-2017} in the following aspects: (i) our approach propagates the uncertainty through backward update; (ii) \citet{ucb-2017} does not use the bonus in the update of $Q$-functions and their bonus is computed when taking the actions; (iii) we establish theoretical connections between the proposed UCB-bonus and LSVI-UCB. SUNRISE~\citep{sunrise-2020} extends \citet{ucb-2017} to continuous control through confidence reward and weighted Bellman backup. Information-Directed Sampling (IDS)~\citep{info-2019} is based on bootstrapped DQN, and chooses actions by balancing the instantaneous regret and information gain. OAC~\citep{oac-2019} uses two $Q$-networks to get lower and upper bounds of the $Q$-value to perform exploration in continuous control tasks. These methods seek to estimate the epistemic uncertainty and choose the optimistic actions. In contrast, we use the uncertainty of value function to construct intrinsic rewards and perform backward update, which propagates future uncertainty to the estimated $Q$-value. 

Uncertainty Bellman Equation (UBE) \citep{uncer-2018} proposes an upper bound on the variance of the posterior of $Q$-values, which is further utilized for optimism in exploration. 
% UBE uses posterior sampling for exploration, whereas OB2I uses optimism for exploration. 
Bayesian-DQN~\citep{bayesian-dqn} replaces the last layer in deep $Q$-network with Bayesian Linear Regression (BLR) that estimates a posterior of the $Q$-function. These methods use parametric distributions to describe the posterior while OB2I uses the bootstrapped method to construct the confidence bonus. UBE and BLR also require inverting a large matrix in training and hence is computational expensive. Previous methods also utilize the epistemic uncertainty of dynamics through Bayesian posterior~\citep{IGM-2020} and ensembles~\citep{disagree-2019}. Nevertheless, they consider single-step uncertainty, while we consider the long-term uncertainty in an episode.

To measure the novelty of states for constructing count-based intrinsic rewards, previous methods have attempted to use density model~\citep{count-2016,count-2017}, static hashing~\citep{ex2-2017,Contingency-2019,optpi-2020}, episodic curiosity~\citep{reach-2019,ngu-2020}, curiosity-bottleneck~\citep{cb-2019}, information gain~\citep{vime-2016} and prediction error from random networks~\citep{RND-2019} for novelty evaluation. The curiosity-driven exploration based on prediction-error of environment models such as ICM~\citep{curiosity-2017,largescale-2019}, EMI~\citep{emi-2019}, and variational dynamics~\citep{bai-2020} enable the agents to explore in a self-supervised manner. According to~\citet{bonux-2020}, although bonus-based methods show promising results in hard exploration tasks like Montezuma's Revenge, they do not perform well on other Atari games. Meanwhile, NoisyNet~\citep{noise2-2018} performs significantly better than bonus-based methods evaluated by the entire Atari suite. Overall, \citet{bonux-2020} suggests that the pace of the exploration progress might have been obfuscated by some promising results only on a few selected hard exploration games. We follow this principle and evaluate OB2I on the Atari suite with 49 games.

Beyond model-free methods, model-based RL also uses optimism for planning and exploration~\citep{nix1994estimating}. Model-assisted RL \citep{kalweit2017uncertainty} uses ensembles to make use of artificial data with high uncertainty. \citet{buckman2018sample} uses ensemble dynamics and $Q$-functions to use model rollouts when they do not cause large errors. Planning to explore~\citep{sekar2020planning} seeks out future uncertainty by integrating uncertainty to Dreamer~\citep{dreamer-2020}. Ready Policy One~\citep{ball2020ready} optimizes policies for both reward and model uncertainty reduction. Noise-Augmented RL~\citep{pacchiano2020optimism} uses statistical bootstrap to generalize the optimistic posterior sampling~\citep{agrawal2017posterior} to DRL. Hallucinated UCRL~\citep{curi2020efficient} reduces optimistic exploration to exploitation by enlarging the control space. The model-based RL needs to estimate the posterior of dynamics, while OB2I relies on the posterior of $Q$-functions. 

\section{Experimental Results}

\subsection{Environmental Baselines}

We evaluate the algorithms in high-dimensional image-based tasks, including MNIST Maze \citep{ebu-2019} and 49 Atari games. We refer Appendix~\ref{app:minst-maze} for the experiments on MNIST Maze, and discuss the experiments on Atari games in this section. Directly comparing OB2I with baselines using Bootstrapped DQN is not fair, since OB2I uses backward update for training. To achieve fair comparison, we reimplement all Bootstrapped DQN-based baselines with BEBU. We compare the following methods in experiments:
\begin{itemize}[leftmargin=1.0em,leftmargin=1.0em,itemsep=0.1em,topsep=0.1em]
\item \textbf{OB2I}: the proposed principled exploration method.
\item \textbf{BEBU}: a reimplementation of Bootstrapped DQN~\citep{bootstrap-2016} with BEBU.
\item \textbf{BEBU-UCB}: BEBU with optimistic actions selected by the upper bound of $Q$~\citep{ucb-2017,sunrise-2020}. 
\item \textbf{BEBU-IDS}: integrating homoscedastic IDS~\citep{info-2019} into BEBU without distributional RL. 
\end{itemize}
We refer to Appendix~\ref{app-sec-alg} for the algorithmic comparison between all methods. According to EBU~\citep{ebu-2019}, the backward update is significantly more sample-efficient than standard $Q$-learning by using only 20M training frames to achieve the mean human-normalized score of standard DQN, which requires 200M training frames. We follow this setting by training all BEBU-based methods with 20M frames. In our experiments, 20M frames in OB2I is sufficient to produce strong empirical results and achieve competitive results with several baselines using 200M frames. 

We additionally compare the performance of DQN~\citep{DQN-2015}, NoisyNet~\citep{noise2-2018}, Bootstrapped DQN (BootDQN)~\citep{bootstrap-2016}, BootDQN-IDS~\citep{info-2019}, UBE~\citep{uncer-2018} in 200M training frames, and Bayesian DQN~\citep{bayesian-dqn} in 20M training frames. We choose NoisyNet as a baseline since it has been evaluated on the entire Atari suite (instead of several hard exploration games) such that it performs substantially better than existing bonus-based methods~\citep{bonux-2020}, including CTS-counts~\citep{count-2016}, PixelCNN-counts \citep{count-2017}, RND~\citep{RND-2019}, and ICM~\citep{curiosity-2017}. UBE and Bayesian-DQN are selected as baselines because they use parametric functions to approximate the posterior of $Q$-values, while OB2I uses a non-parametric bootstrap. BootDQN-IDS has been demonstrated to be a strong baseline~\citep{info-2019} based information-directed sampling and BootDQN.

\subsection{Evaluation Metric and Hyperparameters}

An ensemble policy by a majority vote of $Q$-heads is used for 30 no-op  evaluation. The no-op evaluation indicates a setting that 30 no-op actions are first executed in each evaluation episode to provide diversity for the agent~\citep{DQN-2015}. The majority-vote combines all the heads into a single ensemble policy, which follows the same evaluation method as in~\citet{bootstrap-2016}. We use the popular human-normalized score $\frac{\rm Score_{Agent}-Score_{Random}}{\rm |Score_{human}-Score_{random}|}$ as a baseline score. In Atari games, \citet{bootstrap-2016} observes that the bootstrapping does not contribute much in performance. Empirically, Bootstrapped DQN uses the same samples to train all $Q$-heads in each training step. This empirical simplification is also adopted by \citet{ucb-2017,bootstrap-2018,info-2019}. We use such a simplification for OB2I and all bootstrapped-based methods.

For OB2I, we set both $\alpha_1$ and $\alpha_2$ as $0.5\times 10^{-4}$ by tuning over five popular tasks, including Breakout, Freeway, Qbert, Seaquest, and SpaceInvaders. Generally, small $\alpha_1$ and $\alpha_2$ yield better performance empirically since the bonus accumulates along the episode that usually contains thousands of steps in Atari. We use diffusion factor $\beta=0.5$ for all methods by following~\citet{ebu-2019}. We refer to Appendix~\ref{app-sec-implement} for the detailed specifications. The code is available at \url{https://github.com/Baichenjia/OB2I}.

\begin{figure}[t]
\centering
\includegraphics[width=3.3in]{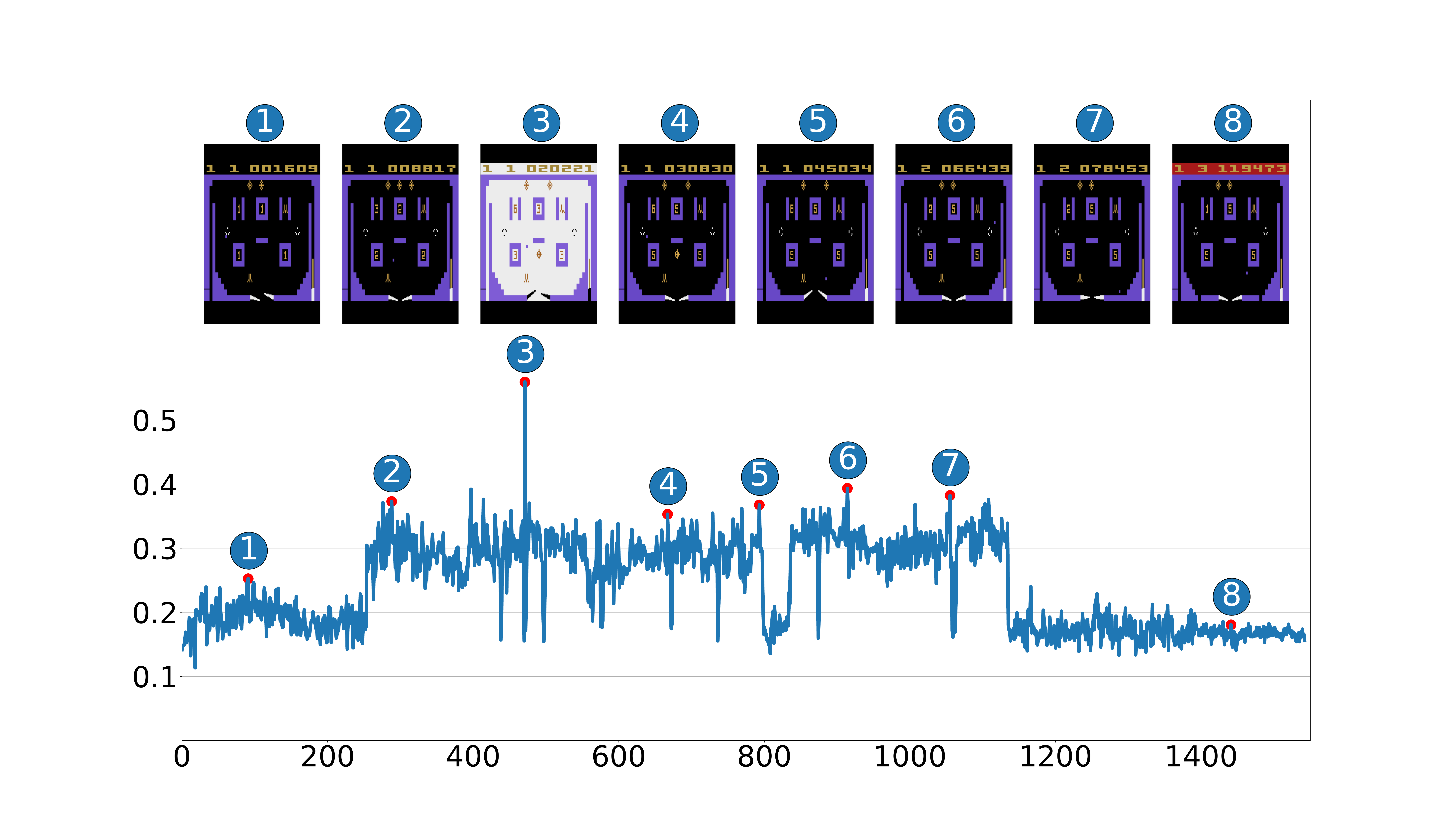}
\caption{Visualizing UCB-bonus in VideoPinball. See video at \url{https://rb.gy/xmzw4g}.}
\label{fig:vis-pinball-small}
\end{figure}

\begin{table*}[t]
\caption{Summary of human-normalized scores in 49 Atari games. BEBU, BEBU-UCB, BEBU-IDS and OB2I are trained for 20M frames with RTX-2080Ti GPU for 5 random seeds.}
\vspace{0.5em}
  \label{tab:scores-atari-sum}
  \centering
\setlength{\tabcolsep}{0.9mm}{
\begin{tabular}{c|ccccc|ccccc}
    \hline
    {Frames} & \multicolumn{5}{|c|}{200M} & \multicolumn{5}{|c}{20M} \cr
	\hline
    {~} & DQN & UBE & BootDQN & NoisyNet & \textbf{BootDQN-IDS} & Bayesian-DQN & BEBU & BEBU-UCB & \textbf{BEBU-IDS} & \textbf{OB2I} \cr
    \hline
    {Mean} & 241\% & 440\% & 553\% & 651\% & \textbf{757\%} & 224\% & 553\% & 610\% & \textbf{622\%} & \textbf{765\%} \cr
    {Median} & 93\% & 126\% & 139\% & 172\% & \textbf{187\%} & 27\% & 36\% & 38\% & \textbf{44\%} & \textbf{50\%} \cr
\hline
\end{tabular}}
\end{table*}

\subsection{Main Results and Visualization}

Table~\ref{tab:scores-atari-sum} reports the overall performance of all the methods on 49 Atari games. According to Table~\ref{tab:scores-atari-sum}, BootDQN-IDS performs better than UBE, BootDQN, and NoisyNet. Thus, BootDQN-IDS outperforms popular bonus-based exploration methods that perform worse than NoisyNet \citep{bonux-2020}. We then reimplement BootDQN-IDS with BEBU, and we refer this version to as BEBU-IDS. We observe that OB2I outperforms BEBU-IDS in both mean and medium scores, as well as outperforming all other bonus-based methods in the backward update setting. We report the detailed raw scores in Appendix~\ref{app-raw-score}. Moreover, Appendix~\ref{app-raw-score-comp} shows that OB2I outperforms BEBU, BEBU-UCB, and BEBU-IDS in 36, 34, and 35 games out of all 49 games, respectively. 

To understand the general-purpose UCB-bonus, we use a trained OB2I agent to interact with the environment for an episode in VideoPinball and record the UCB-bonuses at each step. OB2I improves the performance of VideoPinball significantly and achieves the best score among all baselines. In this task, the pinball moves fast in the playfield to hit bumpers, spinners and rollovers to score points. Our UCB-bonus estimates the uncertainty of interacting with different objects to encourage the pinball to hit less frequently visited objects. The curve in Figure~\ref{fig:vis-pinball-small} shows the UCB-bonuses of the subsampled steps in the episode. We choose eight spikes and visualize the corresponding frames. The events in spikes correspond to rarely hit objects or crucial events, which are important for the agent to obtain rewards: hitting the rollover (1,4,6), using flippers to send the pinball back into the playfield when it drops to the bottom (2), hitting the specific lit target (3), hitting the bumpers and spinners (7,8), and losing the ball (5). Most obviously, the UCB-bonus increases significantly at spike 3 because the ball hit a specific lit target that causes the screen to flash and the agent scores 1000 points, while hitting other objects gets less than 100 points. In the last stage (including spike 8), the UCB-bonuses are low since the score has reached the upper limit and the flippers are locked. We provide more visualization examples in Appendix~\ref{app-vis-OB2I}. 

\begin{figure}[t]
\centering
\includegraphics[width=2.7in]{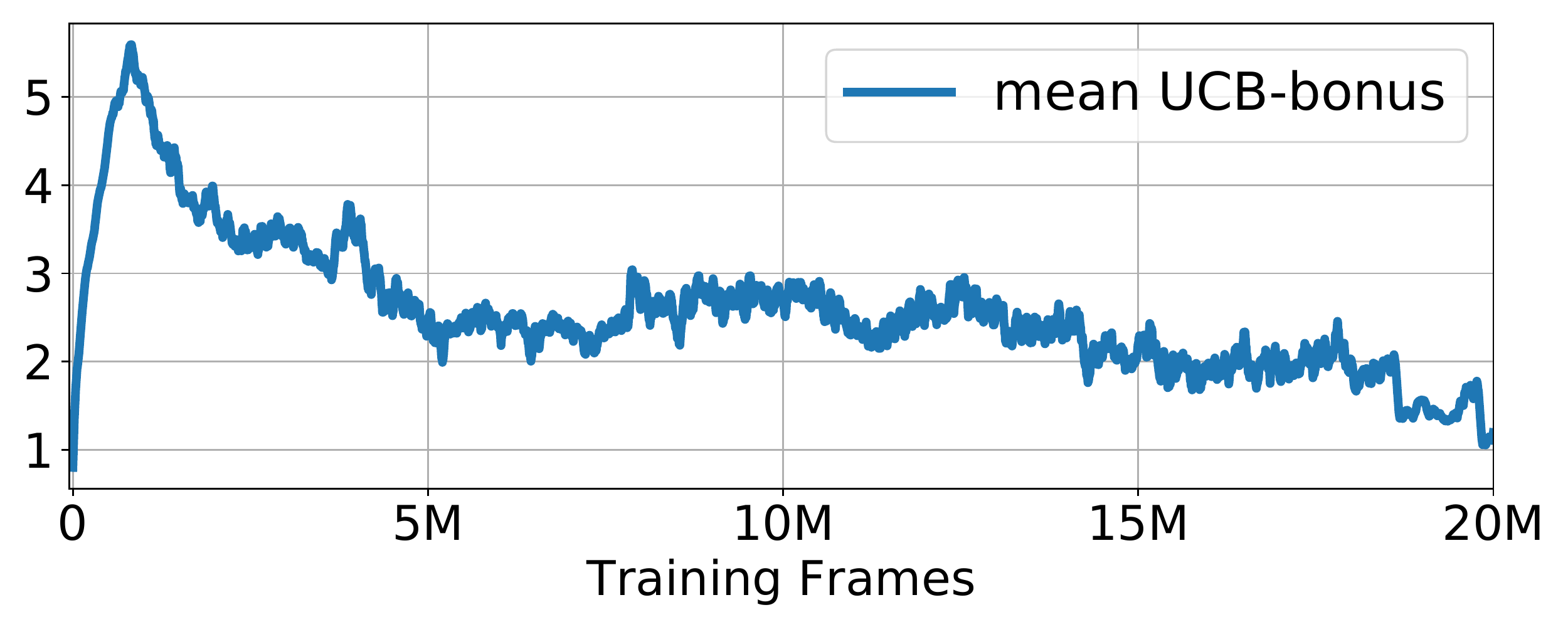}
\caption{The change of mean UCB-bonus in the learning process.}
\label{fig:bonus-total}
\end{figure}

We further record the the mean of the UCB-bonus of the training batch in the learning process. The result is shown in Figure~\ref{fig:bonus-total}. The UCB-bonus is low at the beginning since the networks are randomly initialized. When the agent starts to explore the environment, the mean UCB-bonus increases rapidly to award exploration. As more experiences of state-action pairs are gathered, the mean UCB-bonus reduces gradually, indicating that the bootstrapped value functions concentrate around the optimal value and the epistemic uncertainty decreases. Nevertheless, according to Figure~\ref{fig:vis-pinball-small}, the UCB-bonuses are relatively high at scarcely visited areas or crucial events, and therefore the bonuses promote exploration for the corresponding events.

\subsection{Ablation Study}

We conduct an ablation study to better understand the importance of backward update and bonus term in OB2I. The results of the ablation studies are provided in Table~\ref{tab:ablation}. We observe that (\romannumeral 1) when we use the ordinary update strategy by sampling transitions instead of episodes, OB2I reduces to BootDQN-UCB with significant performance loss. This is consistent with previous conclusions in~\citep{ebu-2019} that backward update is crucial for sample-efficient training; (\romannumeral 2) when the UCB-bonus is set to 0, OB2I reduces to BEBU; (\romannumeral 3) when both the backward update and UCB-bonus are removed, OB2I reduces to standard BootDQN, which performs poorly in 20M training frames; (\romannumeral 4) to illustrate the effect of the proposed UCB-bonus, we substitute it with the popular RND-bonus~\citep{RND-2019}. Specifically, we use an independent RND network to generate RND-bonus for each state in training. The RND-bonus is added to both the immediate reward and next-$Q$. The result shows that our proposed UCB-bonus outperforms RND-bonus without introducing additional complexities compared to BootDQN.

\begin{table}[t]
\small
\caption{Ablation Study}
\vspace{0.2em}
\label{tab:ablation}
\centering
\setlength{\tabcolsep}{0.6mm}{
\begin{tabular}{l|cc|ccc}
    \hline
    {~}         & Backward        & Bonus   & Qbert & SpaceInvaders & Freeway \cr
    \hline
    OB2I        & \checkmark      & UCB     & 4275.0    & 904.9 & 32.1  \cr
    BootDQN-UCB &     -           & UCB     & 3284.7    & 731.8 & 20.5  \cr
    BEBU        & \checkmark      & -       & 3588.4    & 814.4 & 21.5  \cr
    BootDQN     &    -            & -       & 2206.8    & 649.5 & 18.3  \cr
    \hline
    BEBU-RND    & \checkmark      & RND     & 3702.5    & 832.7 & 22.6  \cr
\hline
\end{tabular}}
\end{table}

\section{Conclusion}

In this work, we have proposed a principled exploration method, i.e., OB2I, that shares nice theoretical properties as LSVI-UCB. By integrating with backward induction, the sample efficiency is further enhanced. We evaluate OB2I empirically by solving MNIST maze and 49 Atari games. Results show that OB2I outperforms several strong baselines. The visualizations suggest that high UCB-bonus corresponds to informative experiences for exploration. As far as we see, our work seems to establish the first empirical attempt of uncertainty propagation in deep RL, which exploits the core benefit of theoretical analysis. Moreover, we observe that the connection between theoretical analysis and practical algorithm provides strong empirical performance, which hopefully raises insights on combining theory and practice to the community. Future directions include adapting OB2I to continuous control and integrating OB2I with other expressive bonus schemes.

\section*{Acknowledgements}

This work was supported in part by the National Natural Science Foundation of China under Grant 51935005, in part by the Fundamental Research Program under Grant JCKY20200603C010, and in part by the Science and Technology on Space Intelligent Laboratory under Grant ZDSYS-2018-02. The authors thank Tencent Robotics X for the computation resources supported. The authors also thank the anonymous reviewers, whose invaluable comments and suggestions have helped us to improve the paper.

% In the unusual situation where you want a paper to appear in the
% references without citing it in the main text, use \nocite
\nocite{langley00}

\bibliography{OB2I-full}
\bibliographystyle{icml2021}

%%%%%%%%%%%%%%%%%%%%%%%%%%%%%%%%%%%%%%%%%%%%%%%%%%%%%%%%%%%%%%%%%%%%%%%%%%%%%%%
%%%%%%%%%%%%%%%%%%%%%%%%%%%%%%%%%%%%%%%%%%%%%%%%%%%%%%%%%%%%%%%%%%%%%%%%%%%%%%%
% DELETE THIS PART. DO NOT PLACE CONTENT AFTER THE REFERENCES!
%%%%%%%%%%%%%%%%%%%%%%%%%%%%%%%%%%%%%%%%%%%%%%%%%%%%%%%%%%%%%%%%%%%%%%%%%%%%%%%
%%%%%%%%%%%%%%%%%%%%%%%%%%%%%%%%%%%%%%%%%%%%%%%%%%%%%%%%%%%%%%%%%%%%%%%%%%%%%%%
\onecolumn
\appendix

\icmltitle{Principled Exploration via Optimistic Bootstrapping and Backward Induction \\ (Appendix)}

\section{UCB Bonus in OB2I}\label{app-proof}
Recall that we consider the following regularized least-square problem,
\#\label{eq::regression_problem}
w_t\leftarrow\argmin_{w\in\mathbb{R}^d}\sum_{\tau=0}^{m}\bigl[r_t(s_t^\tau,a_t^\tau)+\max_{a\in\mathcal{A}}Q_{t+1}(s_{t+1}^{\tau},a)-w^{\top}\phi(s_t^{\tau},a_t^{\tau})\bigr]^2 + \lambda \|w\|^2.
\#
In the sequel, we consider a Bayesian linear regression perspective of (\ref{eq::regression_problem}) that captures the intuition behind the UCB-bonus in OB2I. Our objective is to approximate the action-value function $Q_t$ via fitting the parameter $w$, such that
\$
w^\top\phi(s_t, a_t) \approx r_t(s_t,a_t)+\max_{a\in\mathcal{A}}Q_{t+1}(s_{t+1},a),
\$
where $Q_{t+1}$ is given. We assume that we are given a Gaussian prior of the initial parameter $w \sim \mathcal N(0, \mathrm{\mathbf{I}}/\lambda)$. With a slight abuse of notation, we denote by $w_t$ the Bayesian posterior of the parameter $w$ given the set of independent observations $\cD_m = \{(s^\tau_t, a^\tau_t, s^\tau_{t+1})\}_{\tau \in [0,m]}$. 
We further define the following noise with respect to the least-square problem in (\ref{eq::regression_problem}),
\#\label{eq::noise}
\epsilon = r_t(s_t, a_t) + \max_{a\in\cA} Q_{t+1}(s_{t+1}, a) - w^\top\phi(s_t, a_t),
\#
where $(s_t, a_t, s_{t+1})$ follows the distribution of trajectory. The following theorem justifies the UCB-bonus in OB2I under the Bayesian linear regression perspective.
\begin{thm}[Formal Version of Theorem \ref{thm::var_informal}]
\label{thm::var}
We assume that $\epsilon$ follows the standard Gaussian distribution $\mathcal N(0, 1)$ given the  state-action pair $(s_t, a_t)$ and the parameter $w$. Let $w$ follows the Gaussian prior $\cN(0, \mathrm{\mathbf{I}}/\lambda)$. We define
\#\label{eq::def_lambda}
\Lambda_t=\sum_{\tau=0}^{m}\phi(x_t^{\tau},a_t^{\tau})\phi(x_t^{\tau},a_t^{\tau})^\top+\lambda \cdot \mathrm{\mathbf{I}}.
\#
It then holds for the posterior of $w_t$ given the set of independent observations $\cD_m = \{(s^\tau_t, a^\tau_t, s^\tau_{t+1})\}_{\tau \in [0,m]}$ that
\$
\text{\rm Var}\bigl(\phi(s_t, a_t)^\top w_t \bigr) =  \text{\rm Var}\bigl( \tilde Q_t(s_t, a_t)\bigr) = \phi(s_t, a_t)^\top \Lambda^{-1}_t \phi(s_t, a_t), \quad \forall (s_t, a_t)\in\cS\times\cA.
\$
Here we denote by $\tilde Q_t = w_t^\top \phi$ the estimated action-value function.
\end{thm}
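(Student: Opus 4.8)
The plan is to recognize the regularized least-squares problem (\ref{eq::regression_problem}) as the posterior-mean computation of a standard conjugate Bayesian linear regression, and then read off the posterior covariance directly. First I would fix $Q_{t+1}$ and treat the backward-induced targets $y^\tau := r_t(s_t^\tau, a_t^\tau) + \max_{a\in\cA} Q_{t+1}(s_{t+1}^\tau, a)$ as scalar observations. By the noise model (\ref{eq::noise}) together with the assumption $\epsilon \sim \mathcal{N}(0,1)$, each observation obeys $y^\tau \mid w \sim \mathcal{N}(\phi(s_t^\tau, a_t^\tau)^\top w,\, 1)$, and by the independence of the observations in $\cD_m$ the likelihood factorizes across $\tau$.

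Second, I would combine this Gaussian likelihood with the Gaussian prior $w \sim \mathcal{N}(0, \mathrm{\mathbf{I}}/\lambda)$ via Bayes' rule. Writing out the log-posterior, the exponent is a quadratic form in $w$; completing the square in $w$ shows the posterior is again Gaussian, $w_t \sim \mathcal{N}(\mu_t, \Sigma_t)$, whose precision matrix is the sum of the prior precision $\lambda \mathrm{\mathbf{I}}$ and the data precision $\sum_{\tau=0}^{m}\phi(s_t^\tau, a_t^\tau)\phi(s_t^\tau, a_t^\tau)^\top$. This is exactly $\Lambda_t$ as defined in (\ref{eq::def_lambda}), so $\Sigma_t = \Lambda_t^{-1}$, while the posterior mean $\mu_t = \Lambda_t^{-1}\sum_{\tau=0}^{m}\phi(s_t^\tau, a_t^\tau)\,y^\tau$ coincides with the closed-form ridge solution $w_t$ used in line 9 of Algorithm~\ref{alg1}. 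This confirms that the Bayesian posterior mean and the LSVI-UCB point estimate agree, which is the conceptual bridge the theorem is after.

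Third, with the posterior covariance in hand, the target equalities follow from the linearity of the map $w \mapsto \phi(s_t, a_t)^\top w$. For any fixed pair $(s_t, a_t)$ with feature $\phi := \phi(s_t, a_t)$, pushing the Gaussian posterior through this linear functional gives $\mathrm{Var}\bigl(\phi^\top w_t\bigr) = \phi^\top \Sigma_t \phi = \phi^\top \Lambda_t^{-1}\phi$. Since $\tilde Q_t(s_t, a_t) = w_t^\top \phi(s_t, a_t)$ by definition, its posterior variance equals the same expression, yielding the claimed chain of identities and hence equivalence to the squared LSVI-UCB bonus (\ref{eq:lsvi-bonus}).

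The algebra is routine Gaussian conjugacy; the step that actually needs care is the modeling setup rather than the computation. Specifically, I expect the main obstacle to be justifying that the backward-induced target $y^\tau$ may legitimately be treated as a fixed observation corrupted by additive $\mathcal{N}(0,1)$ noise: this requires conditioning on $Q_{t+1}$ having already been determined (as it is in the backward sweep of LSVI), so that $y^\tau$ is a genuine observation rather than a random quantity depending on $w$. It is precisely here that the Bayesian viewpoint serves only as a motivating simplification, consistent with the remark following the statement that the bootstrap disagreement (\ref{eq:bonus}) matches this posterior variance asymptotically and under an uninformative prior \citep{friedman2001elements}.
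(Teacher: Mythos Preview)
Your proposal is correct and follows essentially the same route as the paper: both treat the backward-induced targets as Gaussian observations, apply Gaussian--Gaussian conjugacy by completing the square in the log-posterior to obtain $w_t \sim \mathcal{N}(\mu_t, \Lambda_t^{-1})$, and then push through the linear map $w \mapsto \phi^\top w$ to read off the variance. Your additional remarks about the posterior mean coinciding with line~9 of Algorithm~\ref{alg1} and about conditioning on $Q_{t+1}$ being fixed are accurate and go slightly beyond what the paper spells out, but the core argument is identical.
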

\begin{proof}
The proof follows the standard analysis of Bayesian linear regression. See, e.g., \citet{west1984outlier} for a detailed analysis. We denote the target of the linear regression in (\ref{eq::regression_problem}) by 
\$
y_t = r_t(s_t, a_t) + \max_{a\in\cA} Q_{t+1}(s_{t+1}, a).
\$
By the assumption that $\epsilon$ follows the standard Gaussian distribution, we obtain that 
\#\label{eq::pf_density_y}
y_t \given (s_t, a_t), w \sim \mathcal{N}\bigl(w^\top \phi(s_t, a_t), 1\bigr).
\#
Recall that we have the prior distribution $w \sim \mathcal N(0, \mathrm{\mathbf{I}}/\lambda)$. Our objective is to compute the posterior density $w_t = w \given \cD_m$, where $\cD_m = \{(s^\tau_t, a^\tau_t, s^\tau_{t+1})\}_{\tau \in [0,m]}$ is the set of observations. It holds from Bayes rule that 
\#\label{eq::pf_bayes_rule}
\log p(w \given \cD_m) = \log p(w) + \log p(\cD_m\given w) + Const.,
\#
where $p(\cdot)$ denote the probability density function of the respective distributions. Plugging (\ref{eq::pf_density_y}) and the probability density function of Gaussian distribution into (\ref{eq::pf_bayes_rule}) yields
\#\label{eq::pf_density_posterior}
\log p(w \given \cD_m) &= -\|w\|^2/2 -\sum^m_{\tau = 1} \| w^\top\phi(s^\tau_t, a^\tau_t) - y^\tau_t\|^2/2 + Const.\notag\\
&=-(w - \mu_t)^\top \Lambda^{-1}_t(w - \mu_t)/2 + Const.,
\#
where we define 
\$
\mu_t = \Lambda^{-1}_t \sum^m_{\tau = 1}\phi(s^\tau_t, a^\tau_t) y^\tau_t, \qquad \Lambda_t=\sum_{\tau=0}^{m}\phi(x_t^{\tau},a_t^{\tau})\phi(x_t^{\tau},a_t^{\tau})^\top+\lambda \cdot \mathrm{\mathbf{I}}.
\$ 
Thus, by (\ref{eq::pf_density_posterior}), we obtain that $w_t = w\given \cD_m \sim \mathcal N(\mu_t, \Lambda^{-1}_t)$. It then holds for all $(s_t, a_t)\in\cS\times\cA$ that
\$
\text{\rm Var}\bigl(\phi(s_t, a_t)^\top w_t \bigr) =  \text{\rm Var}\bigl( \tilde Q_t(s_t, a_t)\bigr) = \phi(s_t, a_t)^\top \Lambda^{-1}_t \phi(s_t, a_t), 
\$
which concludes the proof of Theorem \ref{thm::var}.
\end{proof}
\begin{remark}[Extension to Neural Network Parameterization]
{\rm 
We remark that our proof can be extended to explain deep neural network parametrization under the overparameterized network regime \citep{arora2019fine}. Under such a setting, a two-layer neural network $f(\cdot; W)$ with parameter $W$ and ReLU activation function can be approximated by
\$
f(x; W) &\approx f(x; W_0) + \phi_{W_0}(x)^\top (W - W_0) = \phi_{W_0}(x)^\top W, \quad \forall x\in\mathcal{X},
\$
where the approximation holds if the neural network is sufficiently wide \cite{arora2019fine}. Here $W_0$ is the Gaussian distributed initial parameter and $\phi_{W_0} = ([\phi_{W_0}]_1, \ldots, [\phi_{W_0}]_m)^\top$ is the feature embedding defined as follows,
\$
[\phi_{W_0}(x)]_r = \frac{1}{\sqrt{m}}\sigma\bigl( x^\top [W_0]_r \bigr), \quad \forall x\in\mathcal{X},~r\in[m].
\$
Hence, if we consider a Bayesian perspective of training neural network, where the parameter $W$ is obtained by solving a Bayesian linear regression with the feature $\phi_{W_0}$, then the proof of Theorem \ref{thm::var} can be applied to the setting upon conditioning on the random initialization $W_0$. Thus, Theorem \ref{thm::var} applies to the neural network parameterization under such an overparameterized neural network regime.
%In addition, our theory can be extended to explain deep neural network parametrization under the overparameterized network regime (Jacot, et al. 2018). Under such a regime, the deep neural network can be well estimated by linear models, which can be further connected to the bonus term in LSVI under a similar analysis to our work. We might not16be able to make it before the rebuttal, but we will add the analysis in the Appendix in our future revision
}
\end{remark}
\clearpage
\section{Algorithmic Description}\label{app-sec-alg}

\begin{algorithm}[h!]
\caption{OB2I in DRL}
\label{alg2}
\begin{algorithmic}[1]
\STATE {{\bf Initialize:} replay buffer $\mathcal{D}$, bootstrapped $Q$-network $Q(\cdot;\theta)$ and target network $Q(\cdot;\theta^{-})$}
\STATE {{\bf Initialize:} total training frames $H=20{\rm M}$, current frame $h=0$}
\WHILE {$h<H$}
\STATE {Pick a bootstrapped $Q$-function to act by sampling $k\sim \rm{Unif}\{1,\ldots,K\}$}
\STATE {Reset the environment and receive the initial state $s_0$}
\FOR {step $i=0$ {\bfseries to} Terminal}
\STATE {With $\epsilon$-greedy choose \textcolor{blue}{$a_i=\argmax_a Q^k(s_i,a)$}}
\STATE {Take action and observe $r_i$ and $s_{i+1}$, then save the transition in buffer $\mathcal{D}$}
\IF    {$h~\% ~{\rm training~frequency} = 0$}
\STATE {Sample an episodic experience $E=\{\mathbf{S},\mathbf{A},\mathbf{R},\mathbf{S}'\}$ with length $T$ from $\mathcal{D}$}
\STATE {Initialize a $Q$-table $\tilde{\mathbf{Q}}=Q(\mathbf{S}',\mathcal{A};\theta^{-})\in\mathbb{R}^{K\times |\mathcal{A}|\times T}$ by the target $Q$-network}
\STATE {Compute the UCB-bonus for immediate reward for all steps to construct $\mathbf{B}\in\mathbb{R}^{T}$}
\STATE {Compute the action matrix $\mathbf{\tilde{A}}=\argmax_{a} \mathbf{\tilde{Q}}[\cdot,a,\cdot]\in\mathbb{R}^{K\times T}$ to gather all $a'$ of next-$Q$}
\STATE {Compute the UCB-bonus for next-$Q$ for all heads and all steps to construct $\mathbf{\tilde{B}}\in\mathbb{R}^{K\times T}$}
\STATE {Compute the mask matrix $\mathbf{M}\in \mathbb{R}^{K\times T}$ where $\mathbf{M}[k,t]=\mathds{1}_{\mathbf{\tilde{A}}[k,t]\neq \mathbf{A}_{t+1}}$}
\STATE {Initialize target table $\mathbf{y}\in\mathbb{R}^{K\times T}$ to zeros, and set $\mathbf{y}[\cdot,T-1]=\mathbf{R}_{T-1}+\alpha_1\mathbf{B}_{T-1}$}
\FOR {$t=T-2$ {\bfseries to} $0$}
\STATE {\textcolor{blue}{$\tilde{\mathbf{Q}}[\cdot,a_{t+1},t]\leftarrow \beta \mathbf{y}[\cdot,t+1]+(1-\beta)\tilde{\mathbf{Q}}[\cdot,a_{t+1},t]$}}
\STATE {\textcolor{blue}{$\mathbf{y}[\cdot,t]\leftarrow \bigl(\mathbf{R}_t+\alpha_1\mathbf{B}_t\bigr)+\gamma\bigl(\mathbf{\tilde{Q}}[\cdot,a',t]+\alpha_2\mathbf{M}[\cdot,t]\circ\mathbf{\tilde{B}}[\cdot,t]\bigr)$ where $a'=\mathbf{\tilde{A}}[\cdot,t]$}}
\ENDFOR
\STATE {Compute the $Q$-value of $(\mathbf{S},\mathbf{A})$ for all heads as $\mathbf{Q}=Q(\mathbf{S},\mathbf{A};\theta)\in\mathbb{R}^{K\times T}$}
\STATE {Perform a gradient descent step on $(\mathbf{y}-\mathbf{Q})^2$ with respect to $\theta$}
\ENDIF
\STATE {Every $C$ steps reset $\theta^{-}\leftarrow\theta$}
\STATE {$h\leftarrow h+1$}
\ENDFOR
\ENDWHILE
\end{algorithmic}
\end{algorithm}

\begin{remark}[Remark on $\epsilon$-Greedy]{\rm
We adopt the $\epsilon$-greedy technique based on the empirical concerns. Empirically, $\epsilon$-greedy is helpful at the early stage of training, since the bootstrapped $Q$-heads typically lack diversity at the early stage of training. As shown in Figure \ref{fig:bonus-total}, the bonus for OB2I is small at the begining of training. A similar observation also arises in Randomized Prior Function \citep{bootstrap-2018}, where each head is initialized together with a random but fixed prior function to improves the diversity between Q-heads at the initialization. In OB2I, we use $\epsilon$-greedy as an empirical technique to improve the diversity of Q-heads at the beginning of training while diminishing $\epsilon$-term to zero as the training evolves. For a fair comparison, in our experiments, we preform $\epsilon$-greedy for all BEBU-based baselines (BEBU, BEBU-UCB, and BEBU-IDS) with the same values of $\epsilon$. We remark that the $\epsilon$-greedy technique is also widely used in implementations of methods based on Bootstrapped DQN, including Bootstrapped DQN implementation at \url{https://github.com/johannah/bootstrap_dqn}, \url{https://github.com/rrmenon10/Bootstrapped-DQN}, Sunrise \cite{ucb-2017,sunrise-2020} implementation at \url{https://github.com/pokaxpoka/sunrise}, and the official IDS \citep{info-2019} implementation at \url{https://github.com/nikonikolov/rltf}. NoisyNet \citep{noise2-2018} implementation also applies this technique at \url{https://github.com/Kaixhin/Rainbow}.

In addition, from a theoretical perspective, adopting $\epsilon$-greedy policies in place of greedy policies will hinder the performance difference term $\langle \pi^k, Q^* - Q^{k} \rangle$ in the analysis of LSVI-UCB \citep{jin-2019}, which is upper bounded by zero if $\pi^k$ is the greedy policy corresponding to $Q^k$. In contrast, if $\pi^k$ is the $\epsilon$-greedy policy, adding and subtracting the greedy policy yields an $\epsilon Q_{\max}$ upper bound, which propagates to an additional $O(\epsilon T)$ term in the regret. Therefore, if $\epsilon$ is sufficiently small, the algorithm attains the optimal $\sqrt{T}$-regret. In OB2I, we diminish $\epsilon$-term to zero as the training evolves, which does not incur a large bias to the regret.}
\end{remark}

\clearpage

\begin{algorithm}[h!]
\caption{BEBU \& BEBU-UCB \& BEBU-IDS}
\label{alg-bebu}
\begin{algorithmic}[1]
\STATE {{\bf Input:} Algorithm Type (BEBU, BEBU-UCB, or BEBU-IDS)}
\STATE {{\bf Initialize:} replay buffer $\mathcal{D}$, bootstrapped $Q$-network $Q(\cdot;\theta)$ and target network $Q(\cdot;\theta^{-})$}
\STATE {{\bf Initialize:} total training frames $H=20{\rm M}$, current frame $h=0$}
\WHILE {$h<H$}
\STATE {Pick a bootstrapped $Q$-function to act by sampling $k\sim \rm{Unif}\{1,\ldots,K\}$}
\STATE {Reset the environment and receive the initial state $s_0$}
\FOR {step $i=0$ {\bfseries to} Terminal}
\IF {Algorithm type is \textcolor{blue}{BEBU}}
\STATE {With $\epsilon$-greedy choose \textcolor{blue}{$a_i=\argmax_a Q^k(s_i,a)$}}
\ELSIF {Algorithm type is \textcolor{blue}{BEBU-UCB}}
\STATE {With $\epsilon$-greedy choose \textcolor{blue}{$a_i=\argmax_a [\bar{Q}(s_i,a)+\alpha\cdot\sigma(Q(s_i,a))]$}, where $\bar{Q}(s_i,a_i)=\frac{1}{K}\sum_{k=1}^K Q^k(s_i, a_i)$ and $\sigma(Q(s_i,a_i))=\sqrt{\frac{1}{K}\sum_{k=1}^{K}(Q^k(s_i,a_i)-\bar{Q}(s_i,a_i))^2}$ are the mean and standard deviation of the bootstrapped Q-estimates}
\ELSIF {Algorithm type is \textcolor{blue}{BEBU-IDS}}
\STATE {With $\epsilon$-greedy choose \textcolor{blue}{$a_i=\argmin_{a}\frac{\hat{\Delta}_i(s_i,a)^2}{I_i(s_i,a)}$} by following the regret-information ratio, where $\hat{\Delta}_i(s_i,a_i)=\max_{a'\in\mathcal{A}}u_i(s_i,a')-l_i(s_i,a_i)$ is the expected regret, and $[l_i(s_i,a_i),u_i(s_i,a_i)]$ is the confidence interval. In particular, $u_i(s_i,a_i)=\bar{Q}(s_i,a_i)+\lambda_{\rm ids}\cdot\sigma(Q(s_i,a_i))$ and $l_i(s_i,a_i)=\bar{Q}(s_i,a_i)-\lambda_{\rm ids}\cdot\sigma(Q(s_i,a_i))$. $I(s_i,a_i)=\log (1+\nicefrac{\sigma(Q(s_i,a_i))^2}{\rho^2})+\epsilon_{\rm ids}$ measures the uncertainty, where $\rho$ and $\epsilon_{\rm ids}$ are constants.}
\ELSE 
\STATE {Algorithm type error.}
\ENDIF
\STATE {Take action and observe $r_i$ and $s_{i+1}$, then save the transition in buffer $\mathcal{D}$}
\IF    {$h~\% ~{\rm training~frequency} = 0$}
\STATE {Sample an episodic experience $E=\{\mathbf{S},\mathbf{A},\mathbf{R},\mathbf{S}'\}$ with length $T$ from $\mathcal{D}$}
\STATE {Initialize a $Q$-table $\tilde{\mathbf{Q}}=Q(\mathbf{S}',\mathcal{A};\theta^{-})\in\mathbb{R}^{K\times |\mathcal{A}|\times T}$ by the target $Q$-network}
\STATE {Compute the action matrix $\mathbf{\tilde{A}}=\argmax_{a} \mathbf{\tilde{Q}}[\cdot,a,\cdot]\in\mathbb{R}^{K\times T}$ to gather all $a'$ of next-$Q$}
\STATE {Initialize target table $\mathbf{y}\in\mathbb{R}^{K\times T}$ to zeros, and set $\mathbf{y}[\cdot,T-1]=\mathbf{R}_{T-1}+\alpha_1\mathbf{B}_{T-1}$}
\FOR {$t=T-2$ {\bfseries to} $0$}
\STATE \textcolor{blue}{{$\tilde{\mathbf{Q}}[\cdot,a_{t+1},t]\leftarrow \beta \mathbf{y}[\cdot,t+1]+(1-\beta)\tilde{\mathbf{Q}}[\cdot,a_{t+1},t]$}}
\STATE \textcolor{blue}{{$\mathbf{y}[\cdot,t]\leftarrow \mathbf{R}_t+\gamma \mathbf{\tilde{Q}}[\cdot,a',t]$} where $a'=\mathbf{\tilde{A}}[\cdot,t]$}
\ENDFOR
\STATE {Compute the $Q$-value of $(\mathbf{S},\mathbf{A})$ for all heads as $\mathbf{Q}=Q(\mathbf{S},\mathbf{A};\theta)\in\mathbb{R}^{K\times T}$}
\STATE {Perform a gradient descent step on $(\mathbf{y}-\mathbf{Q})^2$ with respect to $\theta$}
\ENDIF
\STATE {Every $C$ steps reset $\theta^{-}\leftarrow\theta$}
\STATE {$h\leftarrow h+1$}
\ENDFOR
\ENDWHILE
\end{algorithmic}
\end{algorithm}

~\\~\\

\begin{remark}[Remark on Computational Efficiency]
{\rm We remark that OB2I requires much less training time than BEBU-UCB and BEBU-IDS, since both BEBU-UCB and BEBU-IDS requires computing the corresponding confidence bounds in each time step of interaction. In contrast, OB2I only requires estimating the confidence bound for batch training. Meanwhile, the number of interaction steps $L_1$ with the environment are typically set to be much larger than the number of training steps $L_2$ (e.g., in DQN, $L_1\approx4L_2$). Hence, OB2I is more computational efficient under such a conventional setting.}
\end{remark}

\clearpage
\section{Additional Experiment: MNIST Maze}\label{app:minst-maze}

\begin{figure}[h]
\centering
\includegraphics[width=0.45\textwidth]{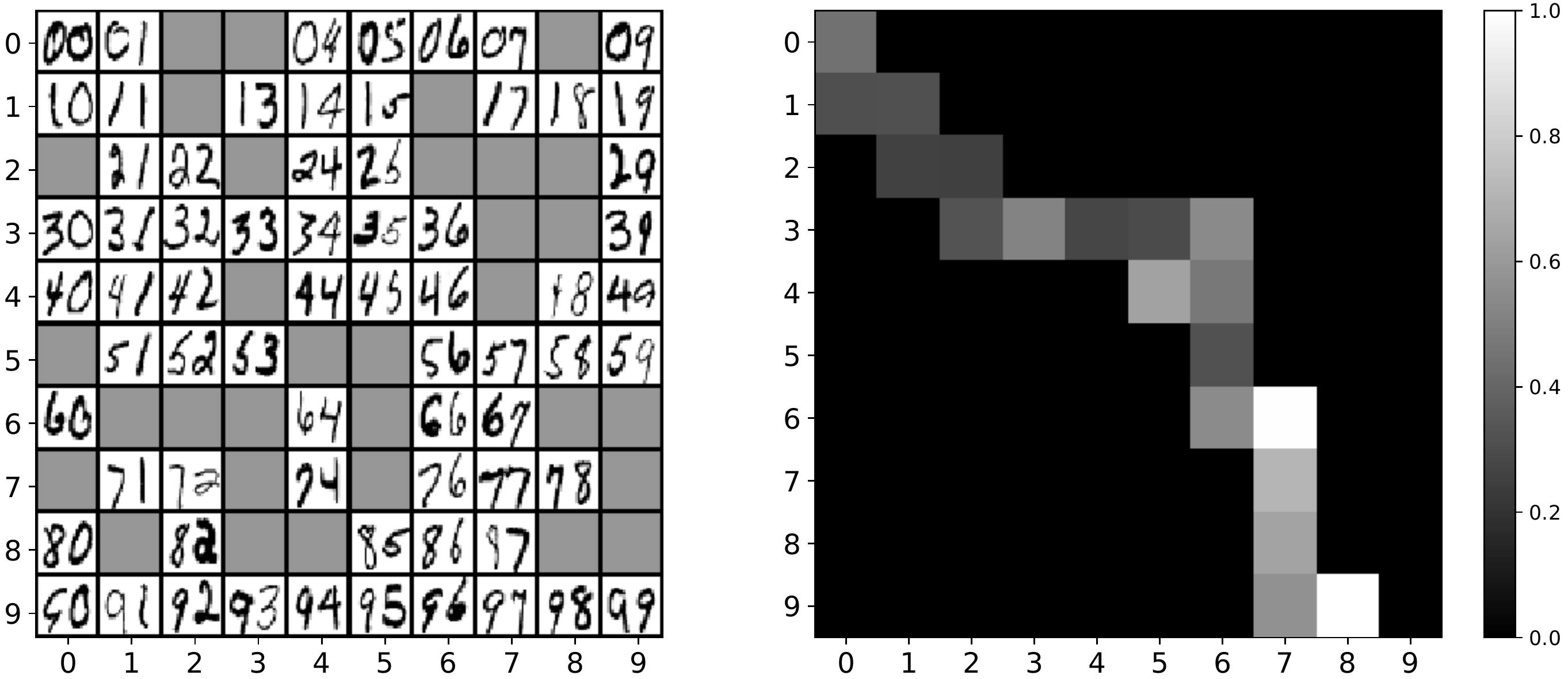}
\caption{An example MNIST maze (left) and the UCB-bonuses in the agent's path (right).}
\label{fig:maze-example}
\end{figure}

\begin{figure}[h]
\centering
\subfigure[Wall density of 30\%]{\includegraphics[width=1.8in]{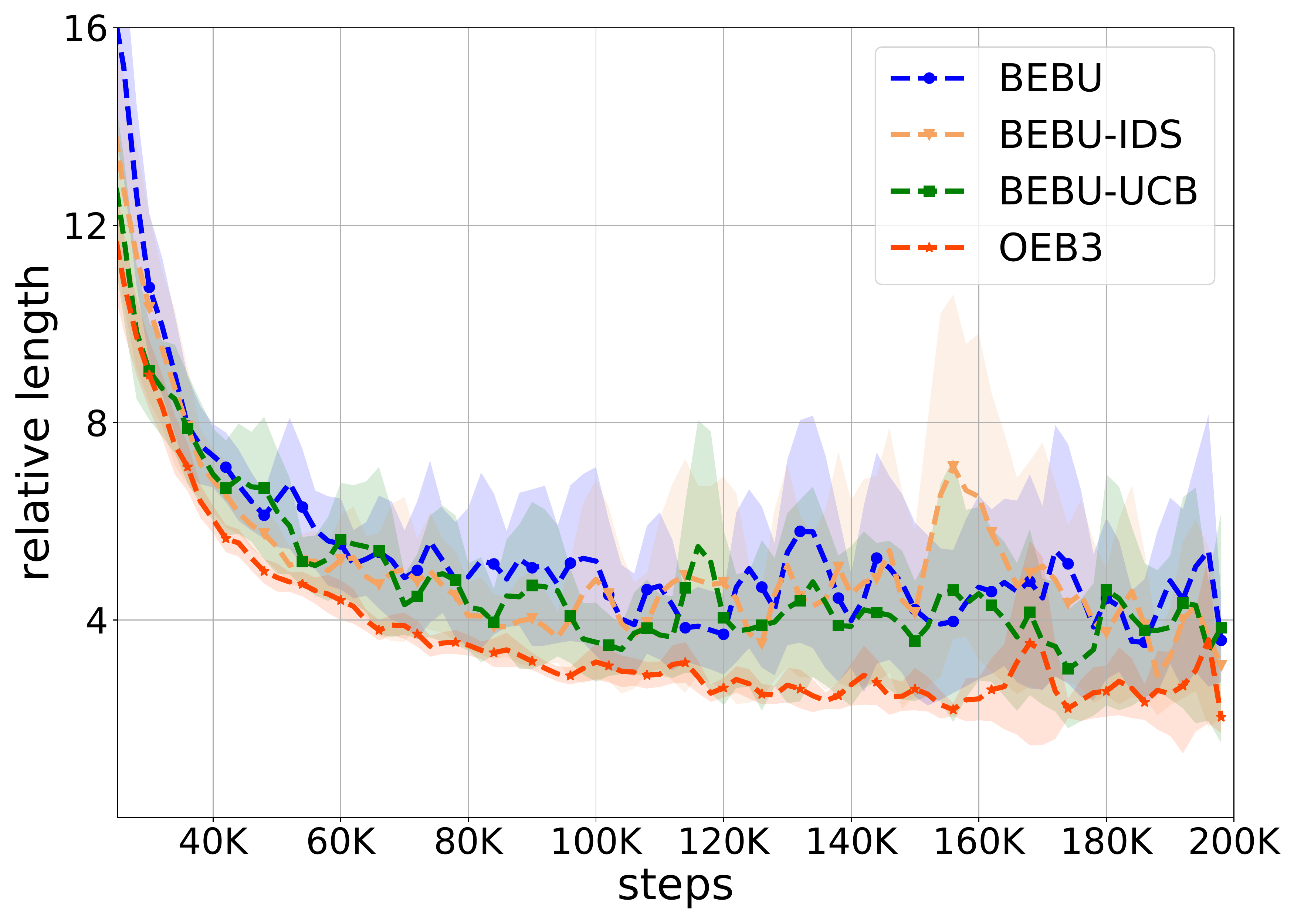}\label{fig:maze-1}}
\subfigure[Wall density of 40\%]{\includegraphics[width=1.8in]{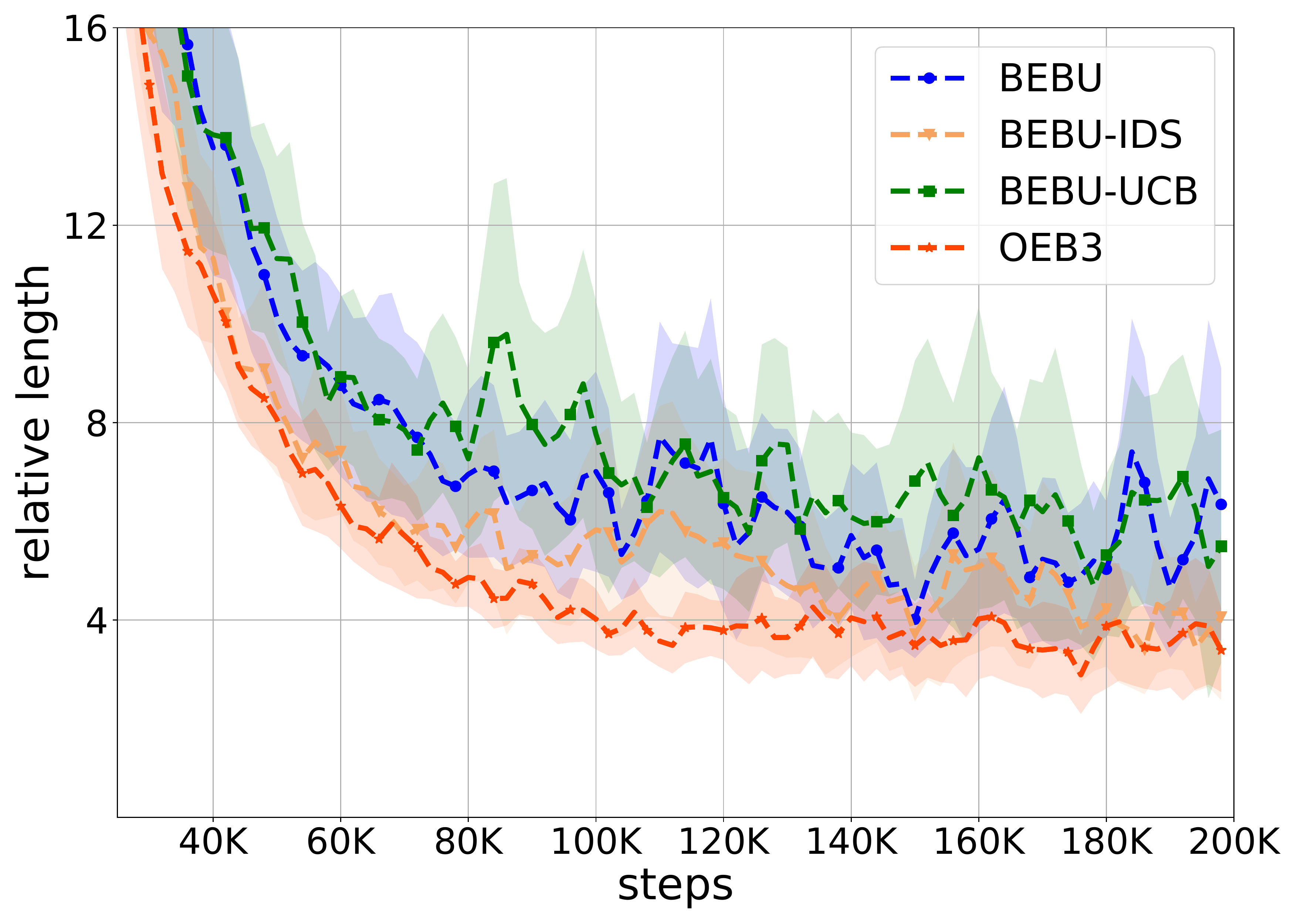}\label{fig:maze-2}}
\subfigure[Wall density of 50\%]{\includegraphics[width=1.8in]{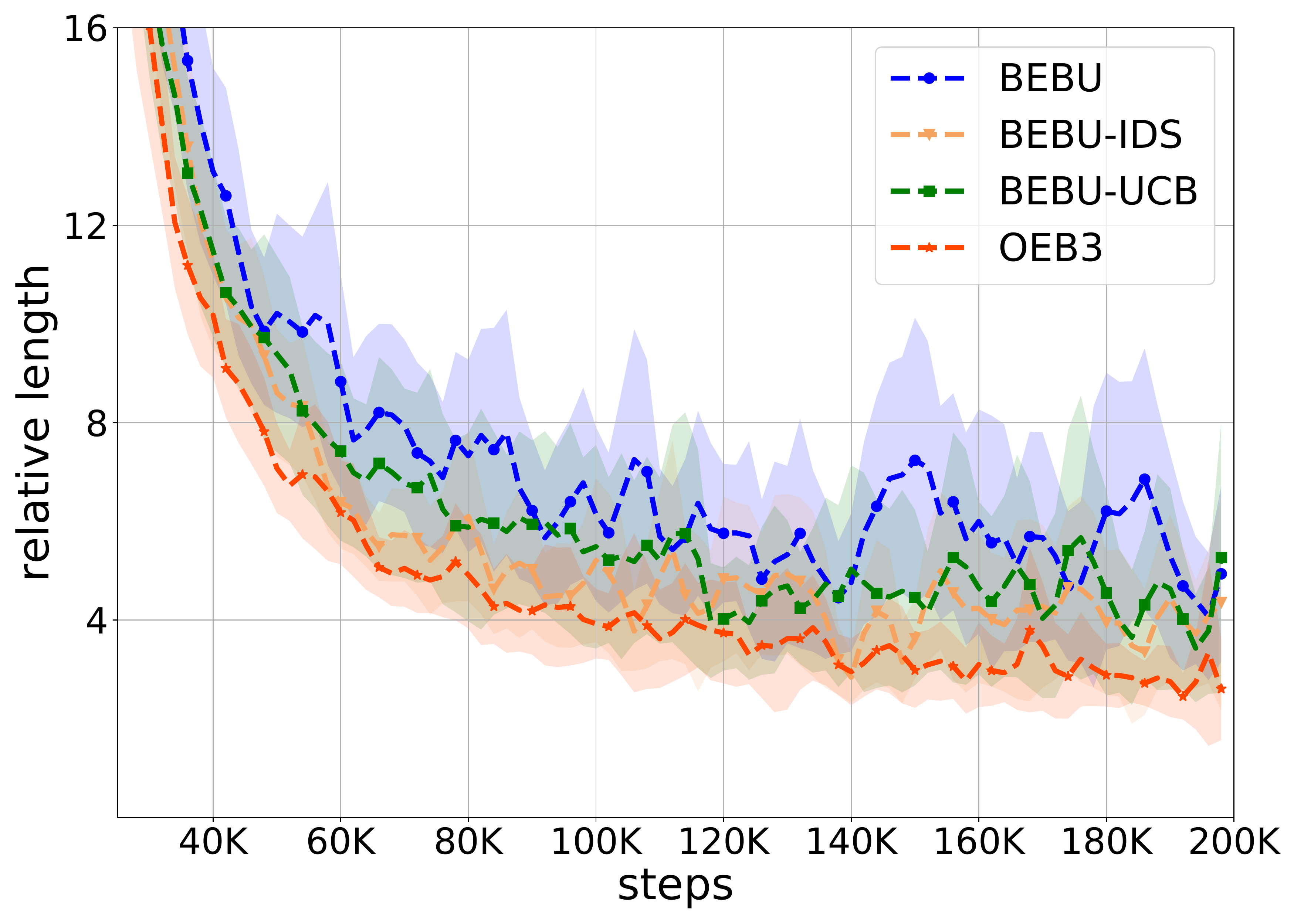}\label{fig:maze-3}}
\caption{Results of 200K steps training of MNIST maze with different wall-density setup.}
\label{fig:maze-result}
\end{figure}

We use $10\times10$ MNIST maze with randomly placed walls to evaluate our method. The agent starts from the initial position $(0,0)$ in the upper-left of the maze and aims to reach the goal position $(9,9)$ in the bottom-right. The state of position $(i,j)$ is represented by stacking two randomly sampled images with label $i$ and $j$ from the MNIST dataset. When the agent steps to a new position, the state representation is reconstructed by sampling images. Hence the agent gets different states even stepping to the same location twice, which minimizes the correlation among locations. Meanwhile, we introduce additional stochasticity in the transition probability. Specifically, the agent has a probability of 10\% to arrive in the adjacent locations when taking an action. For example, when taking action `left', the agent has a 10\% chance of transiting to `up', and a 10\% chance of transiting to `down'. The agent gets a reward of -1 when bumping into a wall, and gets 1000 when reaching the goal. 

We use the different setup of wall-density in the experiment. Here the wall-density means the proportion of walls among all the locations. Figure~\ref{fig:maze-example} (left) shows a generated maze with wall-density of 50\%, where the gray positions represent walls. We train all methods with wall-density of 30\%, 40\%, and 50\%. For each setup, we train 50 independent agents for 50 randomly generated mazes. We use the relative length defined by $l_{\rm agent}/l_{\rm best}$ to evaluate the performance of algorithms, where $l_{\rm agent}$ is the length of the agent's travel to reach the goal in an episode (maximum steps are 1000), and $l_{\rm best}$ is the length of the shortest path to reach the goal. The performance comparison is shown in Figure~\ref{fig:maze-result}. We observe that OB2I performs the best among all the methods. In addition, BEBU-IDS also performs well. To further illustrate the performance of OB2I, We use a trained OB2I agent to take action in the maze presented in Figure~\ref{fig:maze-example} (left). We present the corresponding UCB-bonuses of state-action pairs along the agent's visitation trajectory in Figure~\ref{fig:maze-example} (right). 

We observe that OB2I assigns high UCB-bonus to positions that are critical to exploration. For example, the state-action pairs in location $(3,3)$ and $(6, 7)$ are assigned high UCB-bonus as they are the bottleneck positions in the maze illustrated in Figure~\ref{fig:maze-example} (left), where the agent must visit $(3,3)$ and $(6,7)$ to reach the goal at $(9,9)$. The UCB-bonus encourages the agent to walk through these bottleneck positions correctly. We refer to Appendix~\ref{app-vis-OB2I} for additional examples.

\clearpage
\section{Implementation Detail}\label{app-sec-implement}

\subsection{MNIST Maze}\label{app-sec-mnist}

\textbf{Hyper-parameters of BEBU}. BEBU is the basic algorithm of BEBU-UCB and BEBU-IDS. BEBU uses the same network-architecture as Bootstrapped DQN~\citep{bootstrap-2016}. The diffusion factor and other training parameters are set by following EBU paper~\citep{ebu-2019}. Details are summarized in Table~\ref{tab:hyper-maze}.

\begin{table}[h!]
\small
  \caption{Hyper-parameters of BEBU for MNIST-Maze}
  \label{tab:hyper-maze}
  \centering
  \begin{tabular}{p{0.18\columnwidth}p{0.18\columnwidth}p{0.54\columnwidth}}
    \toprule
    Hyperparameters & Value     & Description \\
    \midrule
    state space  & $28\times28\times2$ & Stacking two images sampled from MNIST dataset with labels according to the agent's current location. \\
    action space & 4 & Including left, right, up, and down.\\
    $K$          & 10 & Number of bootstrapped heads. \\
    network-architecture & conv(64,4,4) conv(64,3,1) dense$\{512,4\}_{k=1}^{K}$ & Using convolution (channels, kernel size, stride) layers first, then fully connected into $K$ bootstrapped heads. Each head has $512$ ReLUs and $4$ linear units.\\
    gradient norm & 10 & The gradient is clipped by 10. The gradient of each head is normalize by $1/K$ according to bootstrapped DQN.\\
    learning starts & 10000 & The agent takes actions according to the initial policy before learning starts. \\
    replay buffer size & 170 & A simple replay buffer is used to store episodic experience. \\
    training frequency & 50 & Number of action-selection step between successive gradient descent steps. \\
    $H$ & 200,000 & Total timesteps to train a single maze. \\
    target network update frequency & 2000 & The target-network is updated every 2000 steps.\\
    optimizer & Adam & Adam optimizer is used for training. Detailed parameters: $\beta_1=0.9$, $\beta_2=0.999$, $\epsilon_{\rm ADAM}=10^{-7}$.\\
    learning rate & 0.001 & Learning rate for Adam optimizer. \\
	$\epsilon$ & $\frac{(h-H)^2}{H^2}$ & Exploration factor. $H$ is the total timesteps for training, and $h$ is the current timestep. $\epsilon$ starts from 1 and is annealed to 0 in a quadratic manner. \\
    $\gamma$ & 0.9 & Discount factor.\\
    $\beta$  & 1.0 & Diffusion factor of backward update. \\
    wall density & 30\%, 40\%, and 50\% & Proportion of walls in all locations of the maze. \\
    reward & -1 or 1000 & Reward is -1 when bumping into a wall, and 1000 when reaching the goal. \\
    stochasticity & 10\% & Has a probability of 10\% to arrive in the adjacent locations when taking an action. \\
    evaluation metric & $l_{\rm rel}=l_{\rm agent}/l_{\rm best}$ & Ratio between length of the agent's path and the best length. \\
    \bottomrule
  \end{tabular}
\end{table}

\textbf{Hyper-parameters of BEBU-UCB}. BEBU-UCB uses the upper-bound of $Q$-values to select actions. In particular, $a=\arg\max_{a\in\mathcal{A}}{[\mu(s,a)+\lambda_{\rm ucb}\sigma(s,a)]}$, where $\mu(s,a)$ and $\sigma(s,a)$ are the mean and standard deviation of bootstrapped $Q$-values $\{Q^k(s,a)\}_{k=1}^{K}$. We use $\lambda_{\rm ucb}=0.1$ in our experiment. 

\textbf{Hyper-parameters of BEBU-IDS}. The action-selection in IDS~\citep{info-2019} follows the regret-information ratio as $a_t=\argmin_{a\in\mathcal{A}}\frac{\hat{\Delta}_t(s,a)^2}{I_t(s,a)}$, which balances the regret and exploration. $\hat{\Delta}_t(s,a)$ is the expected regret that indicates the loss of reward when choosing a suboptimal action $a$. IDS uses a conservative estimate of regret, namely, $\hat{\Delta}_t(s,a)=\max_{a'\in\mathcal{A}}u_t(s,a')-l_t(s,a)$, where $[l_t(s,a),u_t(s,a)]$ is the confidence interval of action-value function. In particular, $u_t(s,a)=\mu(s,a)+\lambda_{\rm ids}\sigma(s,a)$ and $l_t(s,a)=\mu(s,a)-\lambda_{\rm ids}\sigma(s,a)$, where $\mu(s,a)$ and $\sigma(s,a)$ are the mean and standard deviation of bootstrapped $Q$-values $\{Q^k(s,a)\}_{k=1}^{K}$. The information gain $I_t (a)$ measures the uncertainty of action-values by $I(s,a)=\log (1+\frac{\sigma(s,a)^2}{\rho(s,a)^2})+\epsilon_{\rm ids}$, where $\rho(s,a)$ is the variance of the return distribution, which can be measured by C51~\citep{dis-2017} in distributional RL and is a constant in ordinary $Q$-learning. We set $\lambda_{\rm ids}=0.1$, $\rho(s,a)=1.0$, and $\epsilon_{\rm ids}=10^{-5}$ for our experiment.

\textbf{Hyper-parameters of OB2I}. We set $\alpha_1$ and $\alpha_2$ to be $0.01$ for our experiments. We find that adding a normalizer to UCB-bonus $\mathbf{\tilde{B}}$ of the next-$Q$ value enables more stable performance. A similar technique was used in \citet{largescale-2019}. Specifically, we divide $\mathbf{\tilde{B}}$ by a running estimate of its standard deviation. Since the UCB-bonuses for next-$Q$ are typically different among the $Q$-heads, such a normalization allows $Q$-networks to have a smooth and stable update.

\subsection{Atari games}\label{app-sec-atari}

\textbf{Hyper-parameters of BEBU}. We adopt the same basic setting of the Atari environment as \cite{DQN-2015} and \cite{ebu-2019}. We summarize the details to Table~\ref{tab:hyper-atari}.

\begin{table}[h!]
\small
  \caption{Hyper-parameters of BEBU for Atari games}
  \label{tab:hyper-atari}
  \centering
  \begin{tabular}{p{0.18\columnwidth}p{0.18\columnwidth}p{0.54\columnwidth}}
    \toprule
    Hyperparameters & Value     & Description \\
    \midrule
    state space  & $84\times84\times4$ & Stacking 4 recent frames as the input to network.\\
	action repeat & 4 & Repeating each action 4 times. \\
    $K$          & 10 & The number of bootstrapped heads. \\
    network-architecture & conv(32,8,4) conv(64,4,2) conv(64,3,1) dense$\{512,|\mathcal{A}|\}_{k=1}^{K}$ & Using convolution(channels, kernel size, stride) layers first, then fully connected into $K$ bootstrapped heads. Each head has $512$ ReLUs and $|\mathcal{A}|$ linear units.\\
    gradient norm & 10 & The gradient is clipped by 10, and also be normalize by $1/K$ for each head by following bootstrapped DQN.\\
    learning starts & 50000 & The agent takes random actions before learning starts. \\
    replay buffer size & 1M & The number of recent transitions stored in the replay buffer. \\
    training frequency & 4 & The number of action-selection step between successive gradient steps. \\
    $H$ & 20M & Total frames to train an environment. \\
    target network update frequency & 10000 & The target-network is updated every 10000 steps.\\
    optimizer & Adam & Detailed Adam parameters: $\beta_1=0.9$, $\beta_2=0.999$, $\epsilon_{\rm ADAM}=10^{-7}$.\\
    mini-batch size & 32 & The number of training cases for gradient decent each time. \\
    learning rate & 0.00025 & Learning rate for Adam optimizer. \\
	initial exploration & 1.0 & Initial value of $\epsilon$ in $\epsilon$-greedy exploration. \\
	final exploration & 0.1 & Final value of $\epsilon$ in $\epsilon$-greedy exploration. \\
	final exploration frames & 1M & The number of frames that the initial value of $\epsilon$ linearly annealed to the final value. \\
    $\gamma$ & 0.99 & Discount factor.\\
    $\beta$  & 0.5 & Diffusion factor of backward update. \\
    $\epsilon_{eval}$ & 0.05 & Exploration factor in $\epsilon$-greedy for evaluation.\\
    evaluation policy & ensemble vote & The same evaluation method as in Bootstrapped DQN \cite{bootstrap-2016}.\\
	evaluation length & 108000 & The policy is evaluated for 108000 steps. \\
	evaluation frequency & 100K & The policy is evaluated every 100K steps.\\
	max no-ops & 30 & Maximum number no-op actions before an episode starts. \\
    \bottomrule
  \end{tabular}
\end{table}

\textbf{Hyper-parameters of BEBU-UCB}. BEBU-UCB selects actions by $a=\arg\max_{a\in\mathcal{A}}{[\mu(s,a)+\lambda_{\rm ucb}\sigma(s,a)]}$. The detail is given in Appendix~\ref{app-sec-mnist}. We use $\lambda_{\rm ucb}=0.1$ in our experiment by searching coarsely. 

\textbf{Hyper-parameters of BEBU-IDS}. The action-selection follows the regret-information ratio as $a_t=\argmin_{a\in\mathcal{A}}\frac{\hat{\Delta}_t(s,a)^2}{I_t(s,a)}$. See detail in Appendix~\ref{app-sec-mnist}. We use $\lambda_{\rm ids}=0.1$, $\rho(s,a)=1.0$ and $\epsilon_{\rm ids}=10^{-5}$ in our experiment by searching coarsely. 

\textbf{Hyper-parameters of OB2I}. We set $\alpha_1$ and $\alpha_2$ to the same value of $0.5\times 10^{-4}$. The UCB-bonus $\mathbf{\tilde{B}}$ for the next-$Q$ value is normalized by dividing a running estimate of its standard deviation to have a stable performance.

\textbf{Implementation of Bayesian-DQN.} Since Bayesian-DQN is not evaluated in the whole Atari suite, we adopt the official release code in \url{https://github.com/kazizzad/BDQN-MxNet-Gluon} and make two modification for a fair comparison. (1) We add the 30 no-op evaluation mechanism, which we use to evaluate OB2I and other baselines in our work. (2) We set the frame-skip to 4 to be consistent with our baselines. We remark that inconsistency still exists since the original implementation of Bayesian-DQN is based on MX-Net Library, while OB2I and other baselines are implemented with Tensorflow. We release the modified code in \url{https://github.com/review-anon/Bayesian-DQN}.

\textbf{Results of DQN, UBE, BootDQN, Noisy-Net, and BootDQN-IDS.} These methods have been evaluated by the whole Atari suite. We directly adopt the scores reported in the corresponding articles \citep{DQN-2015,uncer-2018,bootstrap-2016,noise2-2018,info-2019}. However, we remark that inconsistency in the comparison exists since (1) UBE, BootDQN, and BootDQN-IDS use double Q-learning, and (2) Noisy-Net uses both the double Q-learning and dueling networks, in their original implementations. (3) In contrast, DQN, OB2I and BEBU-based baselines all use the standard Q-learning without advanced techniques.

\clearpage
\section{Visualizing OB2I}\label{app-vis-OB2I}

OB2I uses the UCB-bonus that indicates the disagreement of bootstrapped $Q$-estimates to measure the uncertainty of $Q$-functions. The state-action pairs with high UCB-bonuses signify the bottleneck positions or meaningful events. We provide visualization in several tasks to illustrate the effect of UCB-bonuses. Specifically, we choose \emph{Mnist-maze} and two Atari games \emph{RoadRunner} and \emph{Mspacman} to analyze. 

\subsection{MNIST-maze}\label{app-vis-OB2I-maze}

Figure~\ref{fig:maze-vis} illustrates the UCB-bonus in four randomly generated mazes. The mazes in Figure~\ref{fig:maze-vis1} and \ref{fig:maze-vis2} have a wall-density of $40\%$. The mazes in Figure~\ref{fig:maze-vis3} and \ref{fig:maze-vis4} have a wall-density of $50\%$. The left of each figure shows the map of maze, where the black blocks represent the walls. We omit the MNIST representation of states in the illustrations for simplification. A trained OB2I agent starts at the upper-left, then takes actions to achieve the goal at bottom-right. The UCB-bonuses of state-action pairs along the agent's visitation trajectory are computed and illustrated on the right of each figure. The value is normalized to $0\sim1$ for visualization. We show the maximal value if the agent appears several times in the same location.  

The positions with UCB-bonuses that higher than 0 draw the path of the agent. The path is usually winding and includes positions beyond the shortest path because the state transition has stochasticity. The state-action pairs with high UCB-bonuses are typically the bottleneck positions in the path. In maze~\ref{fig:maze-vis1}, the agent slips from the right path in position $(4, 7)$ to $(4, 9)$. The state-action in position $(4, 8)$ produces high bonus to guide the agent back to the right path. In maze~\ref{fig:maze-vis2}, the bottleneck state in $(3, 2)$ has high bonus to avoid the agent from entering into the wrong side of the fork. The other two mazes also have bottleneck positions, like $(3, 3)$ in maze~\ref{fig:maze-vis3} and $(7, 6)$ in maze~\ref{fig:maze-vis4}. Visiting these important locations is crucial to reaching the goal. We remark that the UCB-bonus of OB2I encourages the agent to walk through these bottleneck positions correctly.

\begin{figure}[!h]
\centering
\subfigure[]{\includegraphics[width=2.5in]{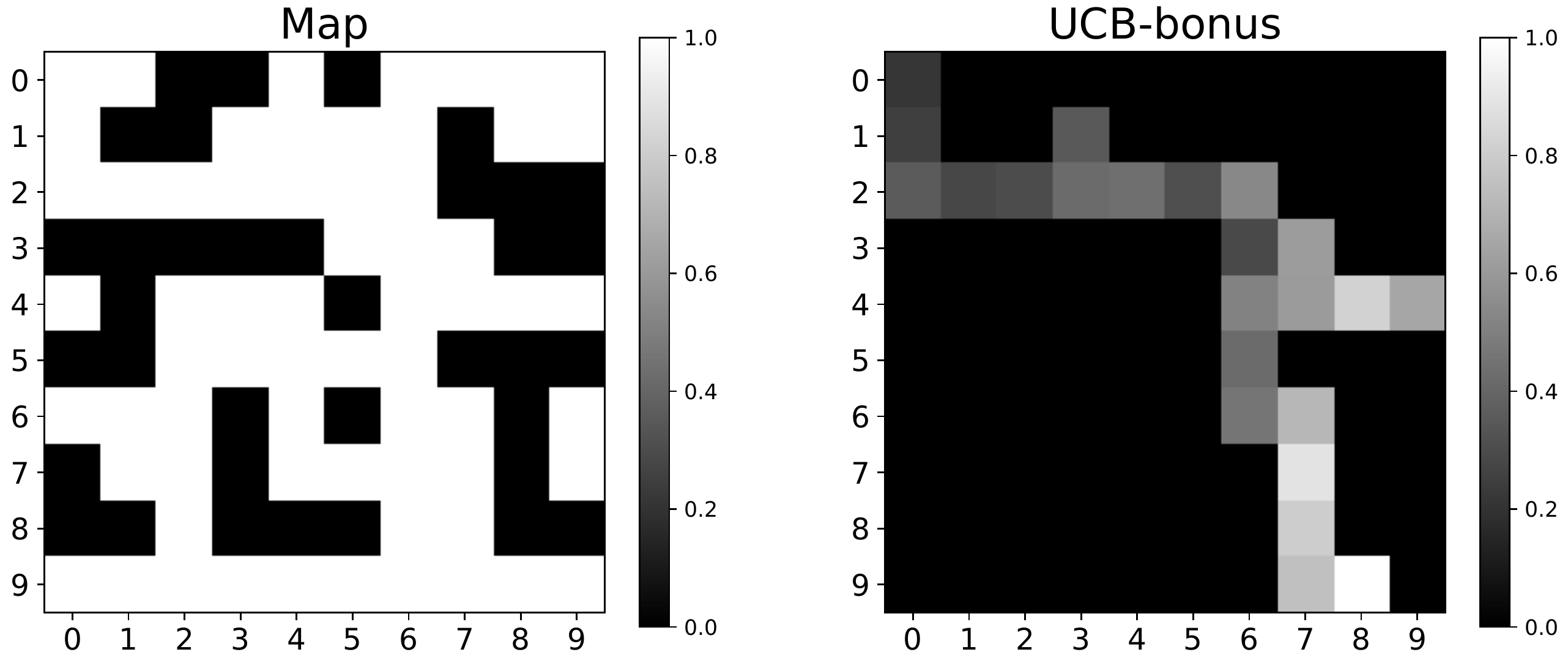}\label{fig:maze-vis1}}
\hspace{1.5em}
\subfigure[]{\includegraphics[width=2.5in]{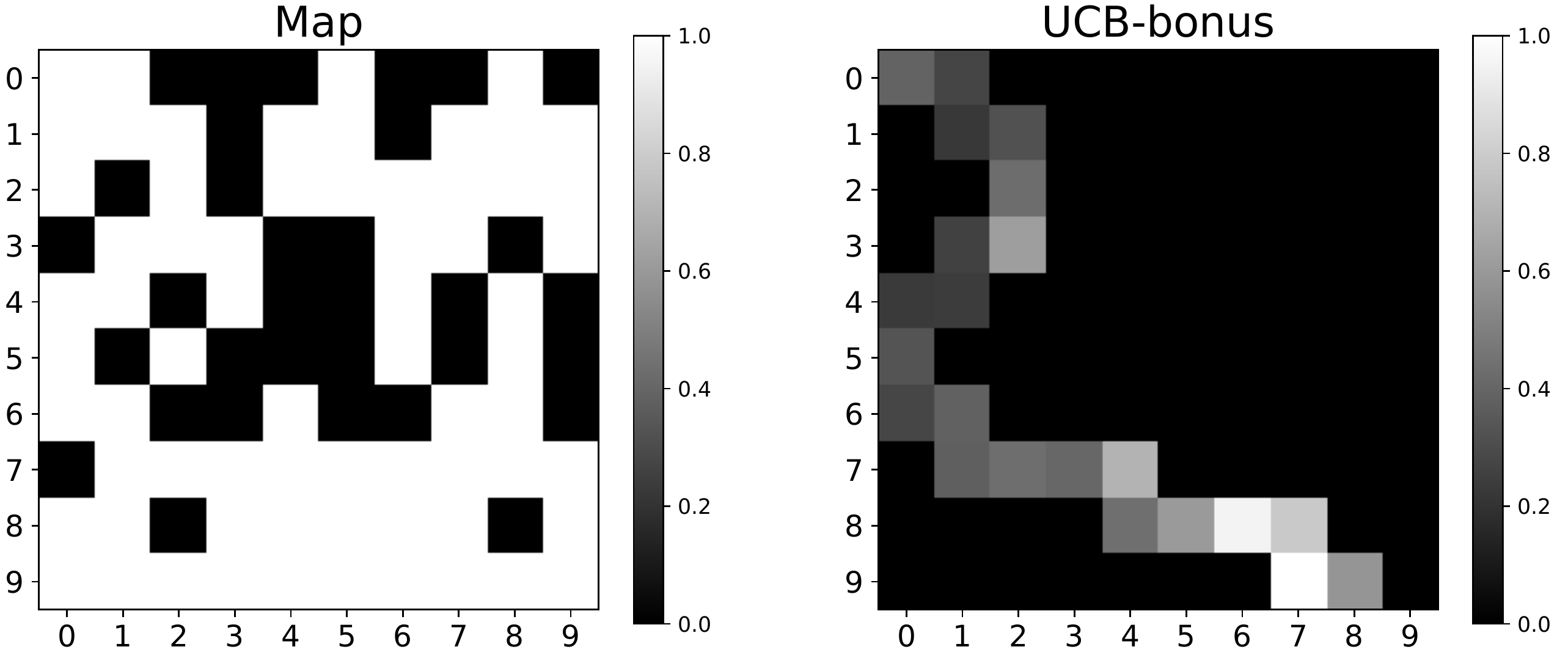}\label{fig:maze-vis2}}
\vspace{-1em}
\\
\subfigure[]{\includegraphics[width=2.5in]{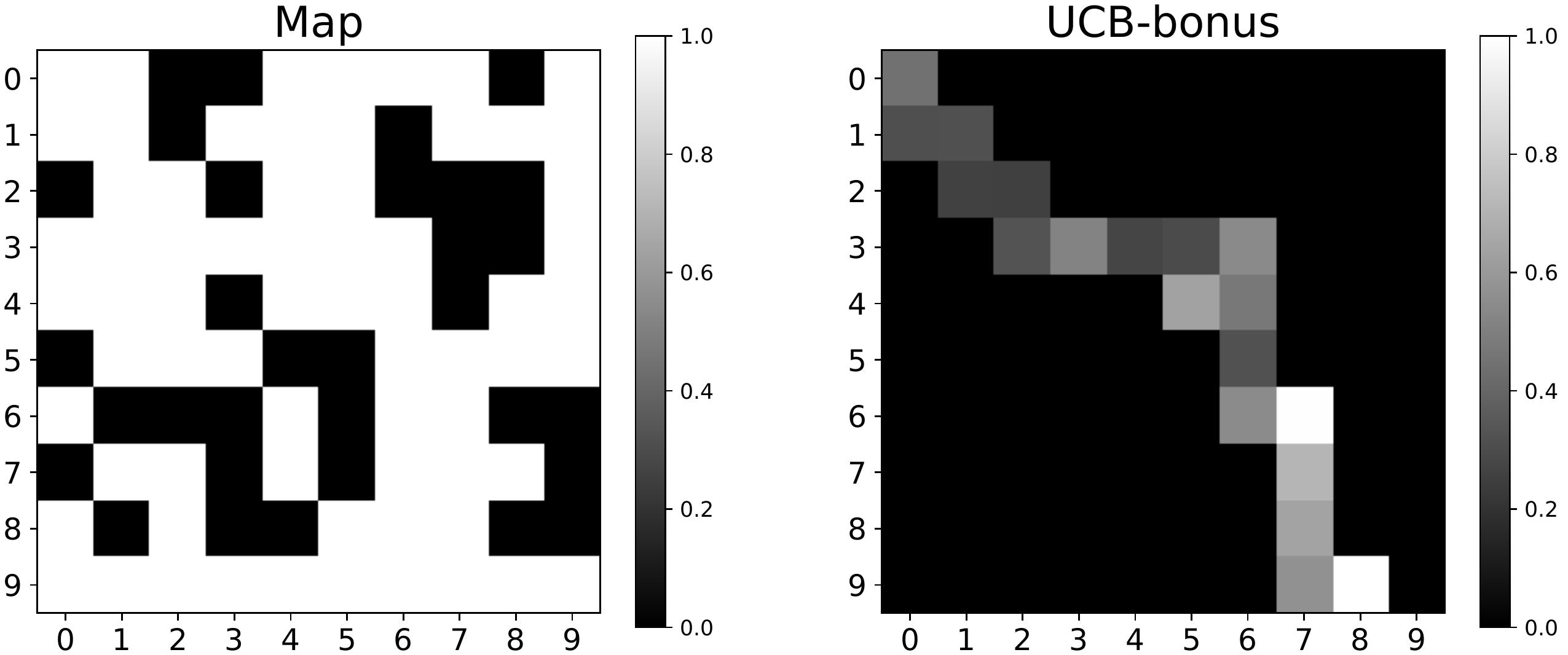}\label{fig:maze-vis3}}
\hspace{1.5em}
\subfigure[]{\includegraphics[width=2.5in]{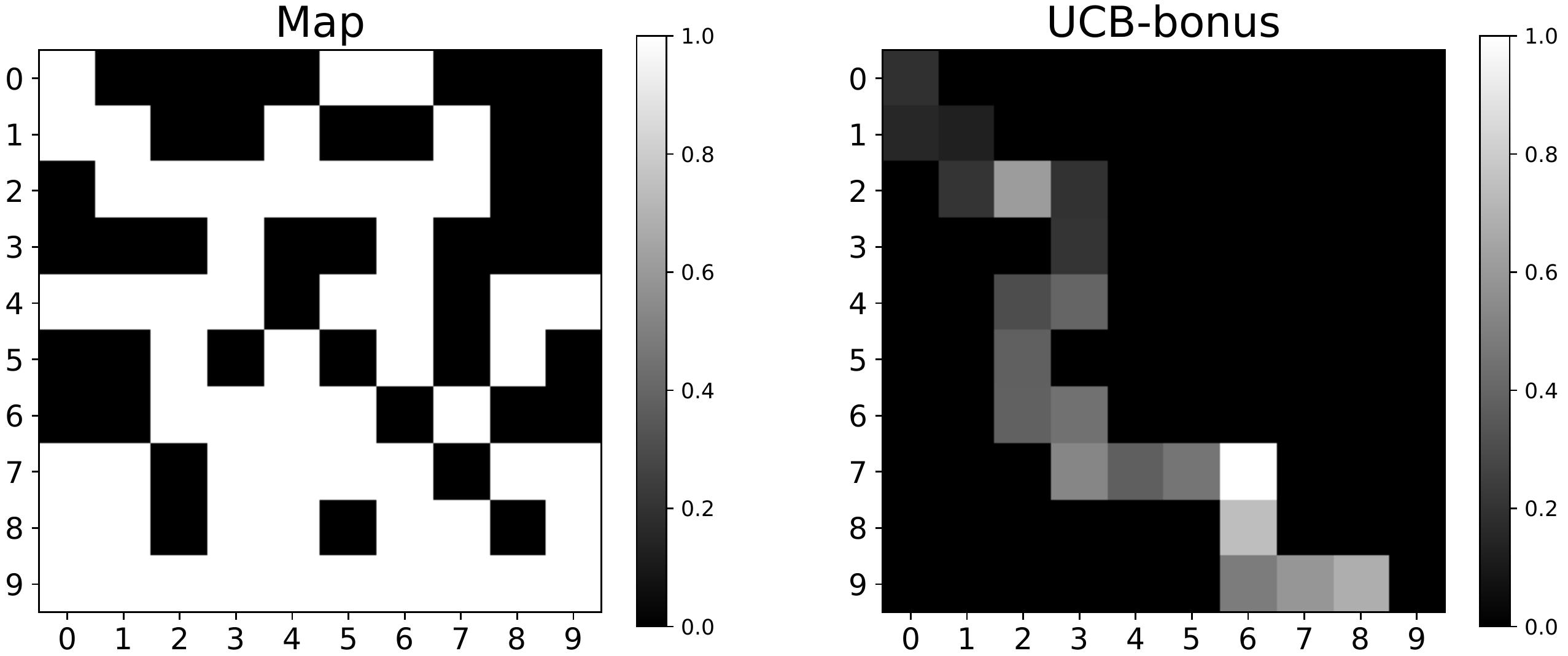}\label{fig:maze-vis4}}
\vspace{-1em}
\caption{Visualization of UCB-bonus in Mnist-maze}
\vspace{-1em}
\label{fig:maze-vis}
\end{figure}

\subsection{RoadRunner}\label{app-vis-OB2I-roadrunner}

In RoadRunner, the agent is chased by Wile E. Coyote and run endlessly to the left to escape. Picking the bird seeds on the street takes 100 points. Inducing Wile E. Coyote to be run over by a car takes 1000 points and also make the agent get rid of the danger of being chased up. In this task, the performance of OB2I is 90\% higher than that of BEBU. To illustrate how OB2I works, we use an OB2I agent to play this game for an episode and records the UCB bonus in all 1152 steps. Figure \ref{fig:visualize-roadrunner} shows the UCB bonus and the corresponding frames in 16 chosen spikes.

We find almost all spikes of UCB-bonus correspond to avoiding trucks and using trucks to get rid of Wile E. Coyote's chase (spike 2-14). The uncertainty is high with the emergence of truck because such a scenario rarely occurs. More importantly, utilizing the truck to get rid of Wile E. Coyote's chase has more uncertainty because the agent may get hit by the truck and lose its life. The UCB-bonus encourages the agent to learn skills that use the truck to gain advantages over the chaser and, hence, obtaining high scores. In addition, the agent eats bird seeds in spike 1. In spikes 15 and 16, the agent comes to a novel round. 

\begin{figure}[!h]
\center
\includegraphics[width=5.3in]{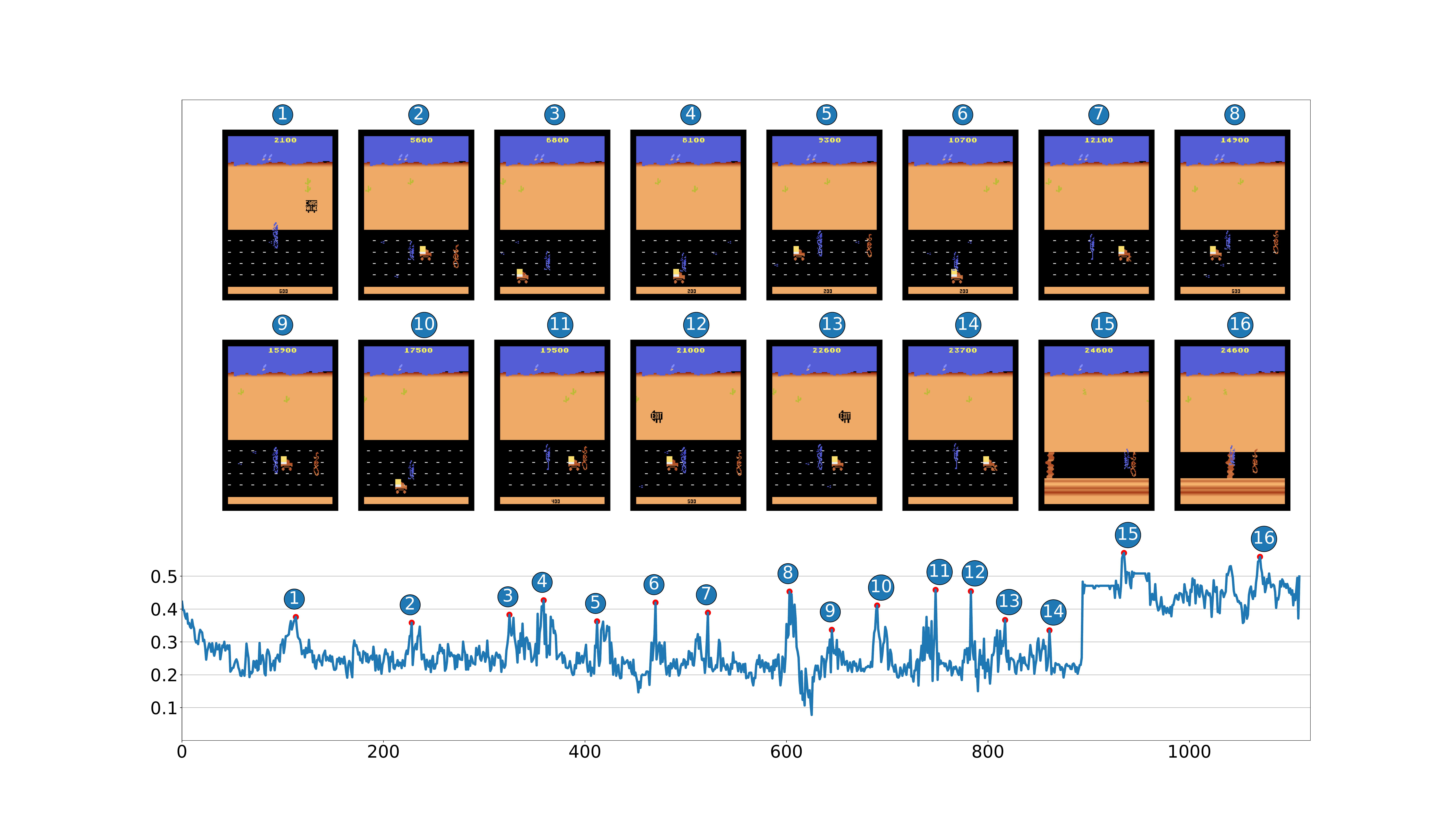}
\caption{Visualization of the UCB-bonus in RoadRunner. We further record frames after each spike, and the video is available at\\ \url{https://www.dropbox.com/sh/6ffgl9v53kkldau/AABzADhD9TW-9gjMYiJI-4jYa?dl=0}}
\label{fig:visualize-roadrunner}
\end{figure}

% \subsection{Breakout}\label{app-vis-OB2I-breakout}

% In Breakout, the agent uses walls and the paddle to rebound the ball against the bricks and eliminate them. We use a trained OB2I agent to interact with the environment for an episode in Breakout. The whole episode contains 3445 steps, and we subsample them every 4 steps for visualization. The curve in Figure~\ref{fig:vis-breakout} shows the UCB-bonus in 861 sampled steps. We choose 16 spikes with high UCB-bonuses and visualize the corresponding frames. The events in spikes usually mean meaningful experiences that are important for the agent to get rewards. In spike 1, the agent is hoping to dig a tunnel and get rewards faster. After digging a tunnel, the balls appear on the top of bricks in spikes 2, 3, 4, 5, 6, 9, and 12. Balls on the top are easier to hit bricks. The agents rebound the ball and throw it over the bricks in spikes 7, 8, 10, and 11. In spike 13, the agent eliminates all bricks and then comes to a new round, which is novel and promising to get more rewards. The agents in spikes 14, 15, and 16 rebound the ball and try to dig a tunnel again. The UCB-bonus encourages the agent to explore the potentially informative and novel state-action pairs to get high rewards. We record 15 frames after each spike for further visualization. The video is available at \url{https://shorturl.at/IJKZ7}.

% \begin{figure}[!h]
% \center
% \includegraphics[width=5.0in]{}
% \caption{Visualization of UCB-bonus in Breakout}
% \vspace{-1em}
% \label{fig:vis-breakout}
% \end{figure}

\subsection{MsPacman}\label{app-vis-OB2I-mspacman}

In MsPacman, the agent earns points by avoiding monsters and eating pellets. Eating an energizer causes the monsters to turn blue, allowing them to be eaten for extra points. We use a trained OB2I agent to interact with the environment for an episode. Figure~\ref{fig:visualize-mspacman} shows the UCB bonus in all 708 steps. We choose 16 spikes to visualize the frames. The spikes of exploration bonuses correspond to meaningful events for the agent to get rewards: starting a new scenario (1,2,9,10), changing direction (3,4,13,14,16), eating energizer (5,11), eating monsters (7,8,12), and entering the corner (6,15). These state-action pairs with high UCB-bonuses make the agent explore the environment efficiently. 
% We record 15 frames after each spike, and the video is shown at.

\begin{figure}[!h]
\center
\includegraphics[width=5.3in]{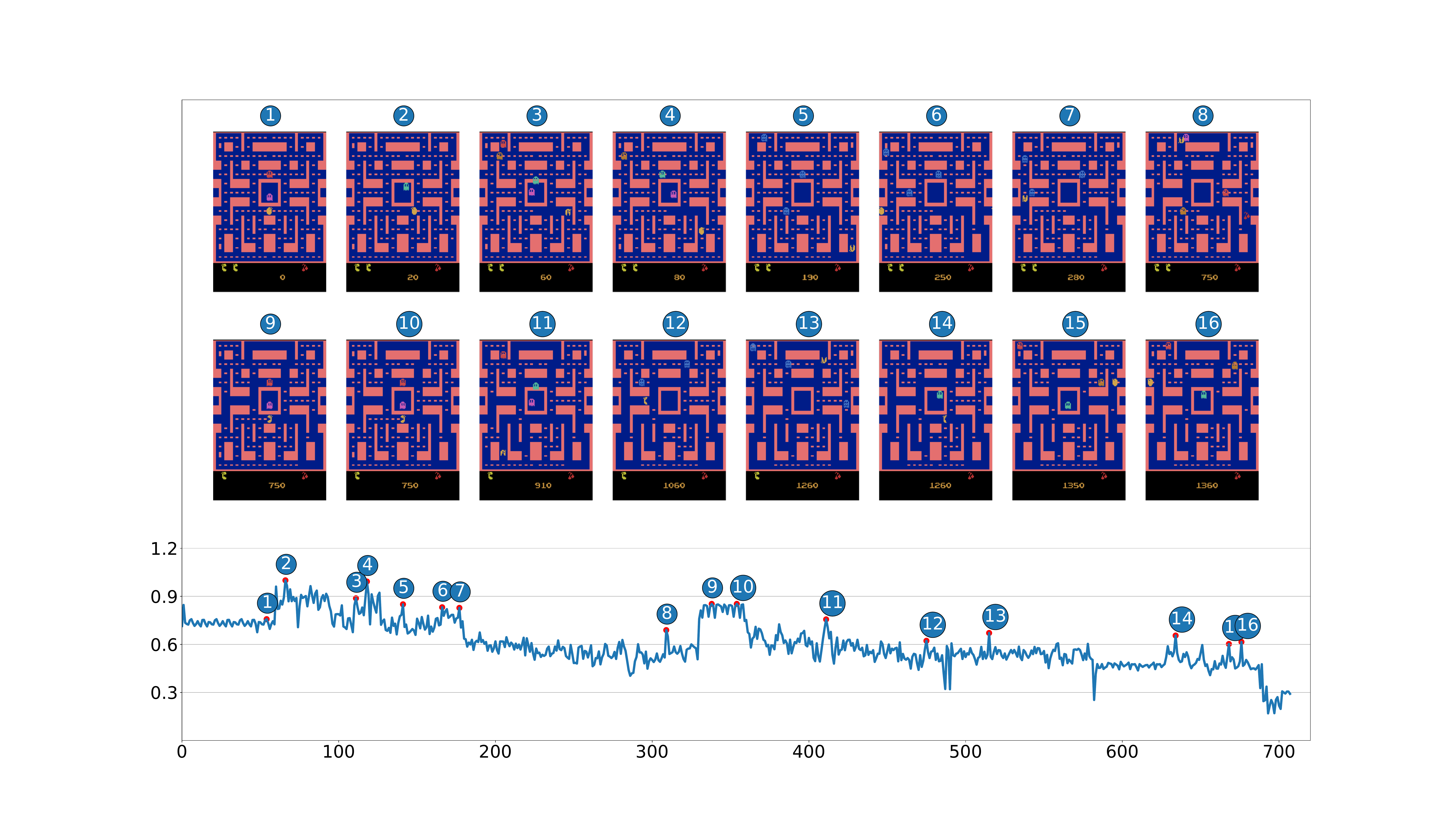}
\caption{Visualization of the UCB-bonus in MsPacman. We further record frames after each spike, and the video is available at\\ \url{https://www.dropbox.com/sh/6ffgl9v53kkldau/AABzADhD9TW-9gjMYiJI-4jYa?dl=0}}
\label{fig:visualize-mspacman}
\end{figure}

\clearpage

\section{Raw Scores of all 49 Atari Games}\label{app-raw-score}

\begin{table}[h!]
  \caption{Raw scores for Atari games. Bold scores signify the best score out of all methods.}
  \vspace{0.5em}
  \label{tab:scores-atari}
  \centering
  \begin{spacing}{0.98}
  \begin{tabular}{l|rr|rrrr}
     \toprule
     & Random & Human & BEBU & BEBU-UCB & BEBU-IDS & OB2I \\
     \hline
	 Alien & 227.8 & 6,875.0   & \textbf{1,118.0} & 811.1 & 857.9 & 916.9 \\
	 Amidar & 5.8 & 1676.0     & 81.7 & \textbf{166.4} & 148.1 & 94.0 \\
	 Assault & 222.4 & 1,496.0 & 1,377.0 & \textbf{3,574.5} & 2,441.8 & 2,996.2 \\
	 Asterix & 210.0 & 8,503.0 & 2,315.0 & 2,709.3 & 2,433.9 & \textbf{2,719.0} \\
	 Asteroids & 719.1 & 13,157.0   & 962.8 & \textbf{1,025.0} & 868.8 & 959.9 \\
	 Atlantis & 12,850.0 & 29,028.0 & 3,020,500.0 & 3,191,600.0 & 3,144,440.0 & \textbf{3,146,300.0} \\
	 Bank Heist & 14.2 & 734.4 & 331.8 & 277.0 & 361.6 & \textbf{378.6} \\
	 Battle Zone & 2,360.0 & 37,800.0 & 5,446.4 & \textbf{16,348.8} & 10,520.0 & 13,454.5 \\
	 BeamRider & 363.9 & 5,775.0 & 2,930.0 & 3,208.3 & 3,391.0  & \textbf{3,736.7} \\
	 Bowling & 23.1 & 154.8      & 29.9 & 30.7 & \textbf{40.2} & 30.0 \\
	 Boxing & 0.1 & 4.3          & 72.4 & 68.3 & 69.8 & \textbf{75.1} \\
	 Breakout & 1.7 & 31.8       & \textbf{473.2} & 382.3 & 412.7 & 423.1 \\
	 Centipede & 2,090.9 & 11,963.0      & 2,547.2 & 2,377.9 & \textbf{3,328.4} & 2,661.8 \\
	 Chopper Command & 811.0 & 9,882.0   & 930.6 & 1,013.4 & 1,100.0 & \textbf{1,100.3} \\
	 Crazy Climber & 10,780.5 & 35,411.0 & 49,735.7 & 39,187.5 & 42,242.9 & \textbf{53,346.7} \\
	 Demon Attack & 152.1 & 3,401.0 & 6,506.3 & 6,840.4 & \textbf{7,080.0} & 6,794.6 \\
	 Double Dunk & -18.6 & -15.5 & -18.9 & \textbf{-16.5} & -17.0 & -18.2 \\
	 Enduro & 0.0 & 309.6        & 504.1 & 697.8 & 513.6 & \textbf{719.0} \\
	 Fishing Derby & -91.7 & 5.5 & -56.7 & -83.8 & \textbf{-53.3} & -60.1 \\
	 Freeway & 0.0 & 29.6        & 21.5 & 21.6 & 21.3 & \textbf{32.1} \\
	 Frostbite & 65.2 & 4,335.0  & 393.4 & 470.4 & 466.2 & \textbf{1,277.3} \\
	 Gopher & 257.6 & 2,321.0    & 4,842.6 & \textbf{7,211.8} & 7,171.5 & 6,359.5 \\
	 Gravitar & 173.0 & 2,672.0  & 256.1 & 321.0 & 283.3 & \textbf{393.6} \\
	 H.E.R.O & 1,027.0 & 25,763.0 & 2,951.4 & 2,905.0 & 3,059.4 & \textbf{3,302.5} \\
	 Ice Hockey & -11.2 & 0.9     & -5.4 & -6.5 & -4.6 & \textbf{-4.2} \\
	 Jamesbond & 29.0 & 406.7     & \textbf{650.0} & 360.3 & 302.1 & 434.3 \\
	 Kangaroo & 52.0 & 3,035.0    & 3624.2 & 2,711.1 & \textbf{4,448.0} & 2,387.0 \\
	 Krull & 1,598.0 & 2,395.0    & 15,716.7 & 11,499.0 & 10,818.0 & \textbf{45,388.8} \\
	 Kung-Fu Master & 258.5 & 22,736.0   & 56.0 & 20,738.9 & \textbf{26,909.7} & 16,272.2 \\
	 Montezuma's Revenge & 0.0 & 4,376.0 & 0.0 & 0.0 & 0.0 & \textbf{0.0} \\
	 Ms. Pacman & 307.3 & 15,693.0       & 1,723.8 & 1,706.8 & 1,615.5 & \textbf{1,794.9} \\
	 Name This Game & 2,292.3 & 4,076.0  & 8,275.3 & 6,573.9 & \textbf{8,925.0} & 8,576.8 \\
	 Pong & -20.7 & 9.3            & 18.1 & 18.5 & 17.2 & \textbf{18.7} \\
	 Private Eye & 24.9 & 69,571.0 & 1,185.8 & 1,925.2 & 1,897.1 & 1,174.1 \\
	 Q*Bert & 163.9 & 13,455.0     & 3,588.4 & 3,783.2 & 3,696.0 & \textbf{4,275.0} \\
	 River Raid & 1,338.5 & 13,513.0 & 3,127.5 & \textbf{3,617.7} & 3,169.1 & 2,926.5 \\
	 Road Runner & 11.5 & 7,845.0 & 11,483.0 & 20,990.7 & 17,281.4 & \textbf{21,831.4} \\
	 Robotank & 2.2 & 11.9        & 10.3 & 13.3 & 10.7 & \textbf{13.5} \\
	 Seaquest & 68.4 & 20,182.0   & 447.0 & \textbf{492.3} & 332.4 & 332.1 \\
	 Space Invaders & 148.0 & 1,652.0 & 814.4 & 782.2 & 794.7 & \textbf{904.9} \\
	 Star Gunner & 664.0 & 10,250.0   & 1,467.2 & 1,201.5 & 1,158.9 & \textbf{1,290.2} \\
	 Tennis & -23.8 & -8.9          & -1.0 & -2.0 & -1.0 & \textbf{-1.0} \\
	 Time Pilot & 3,568.0 & 5,925.0 & 2,622.1 & 3,321.2 & 1,950.6 & \textbf{3,404.5} \\
	 Tutankham & 11.4 & 167.6      & 167.0 & 151.0 & 80.5 & \textbf{297.0} \\
	 Up and Down & 533.4 & 9,082.0 & \textbf{5,954.8} & 4,530.2 & 4,619.7 & 5,100.8 \\
	 Venture & 0.0 & 1,188.0       & 42.9 & 3.4 & \textbf{150.0} & 16.1 \\
	 Video Pinball & 16,256.9 & 17,298.0 & 26,829.6 & 48,959.1 & 58,398.3 & \textbf{80,607.0} \\
	 Wizard of Wor & 563.5 & 4,757.0 & 810.8 & \textbf{1,316.7} & 578.2 & 480.7 \\
	 Zaxxon & 32.5 & 9,173.0 & 1,587.5 & 2,104.8 & 1,594.2 & \textbf{2,842.0} \\
     \bottomrule
  \end{tabular}
  \end{spacing}
\end{table}

\clearpage
\section{Performance Comparison}\label{app-raw-score-comp}

We use the relative scores as 
\[\frac{\rm Score_{Agent}-Score_{Baseline}}{\rm \max\{Score_{human},Score_{baseline}\}-Score_{random}}\] 
to compare OB2I with baselines. The results of OB2I comparing with BEBU, BEBU-UCB, and BEBU-IDS is shown in Figure~\ref{fig:compare-1}, Figure~\ref{fig:compare-2}, and Figure~\ref{fig:compare-3}, respectively.

\begin{figure}[!h]
\vspace{-1em}
\center
\includegraphics[width=5.5in]{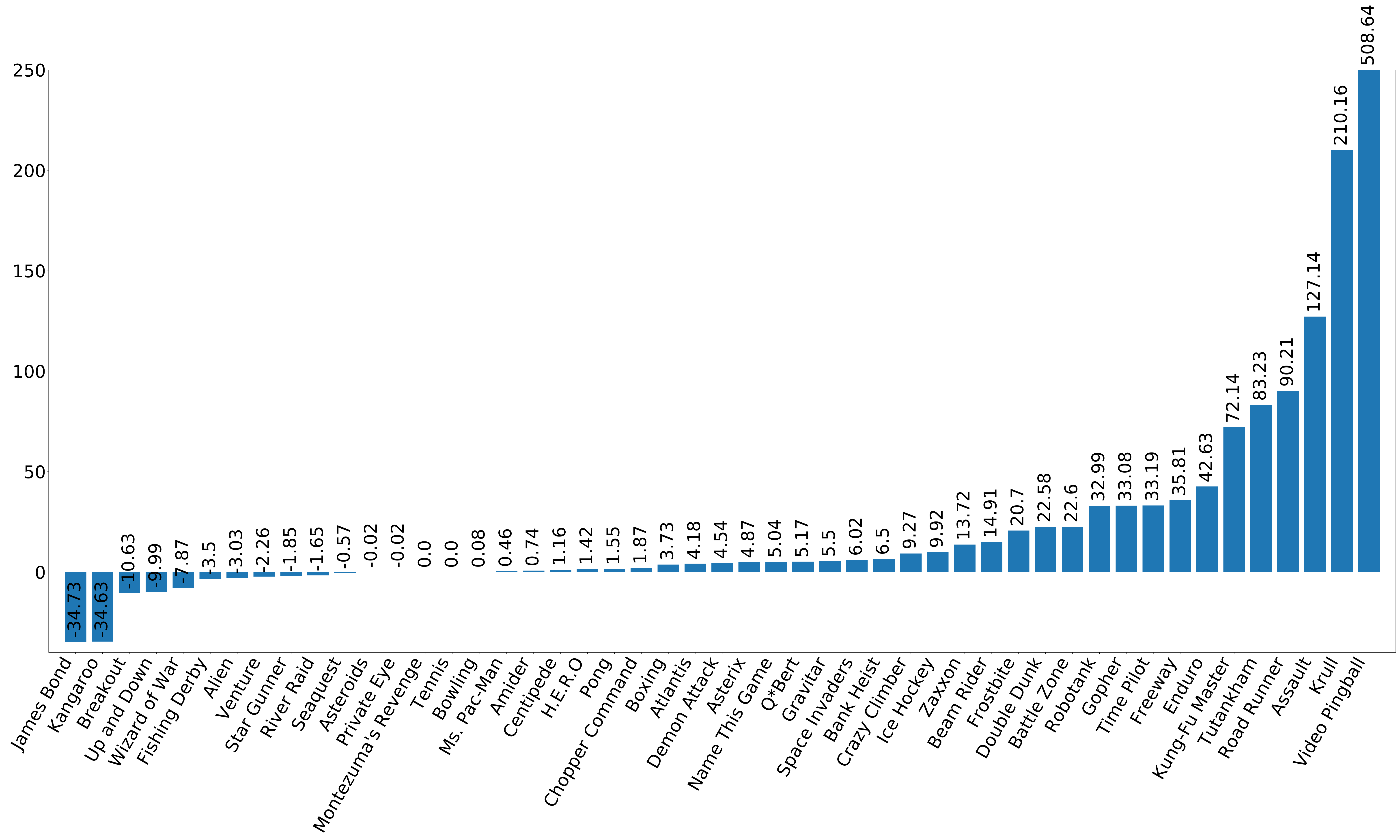}
\caption{Relative score of OB2I compared to BEBU in percents (\%).}
\vspace{-1em}
\label{fig:compare-1}
\end{figure}

\begin{figure}[!h]
\center
\includegraphics[width=5.5in]{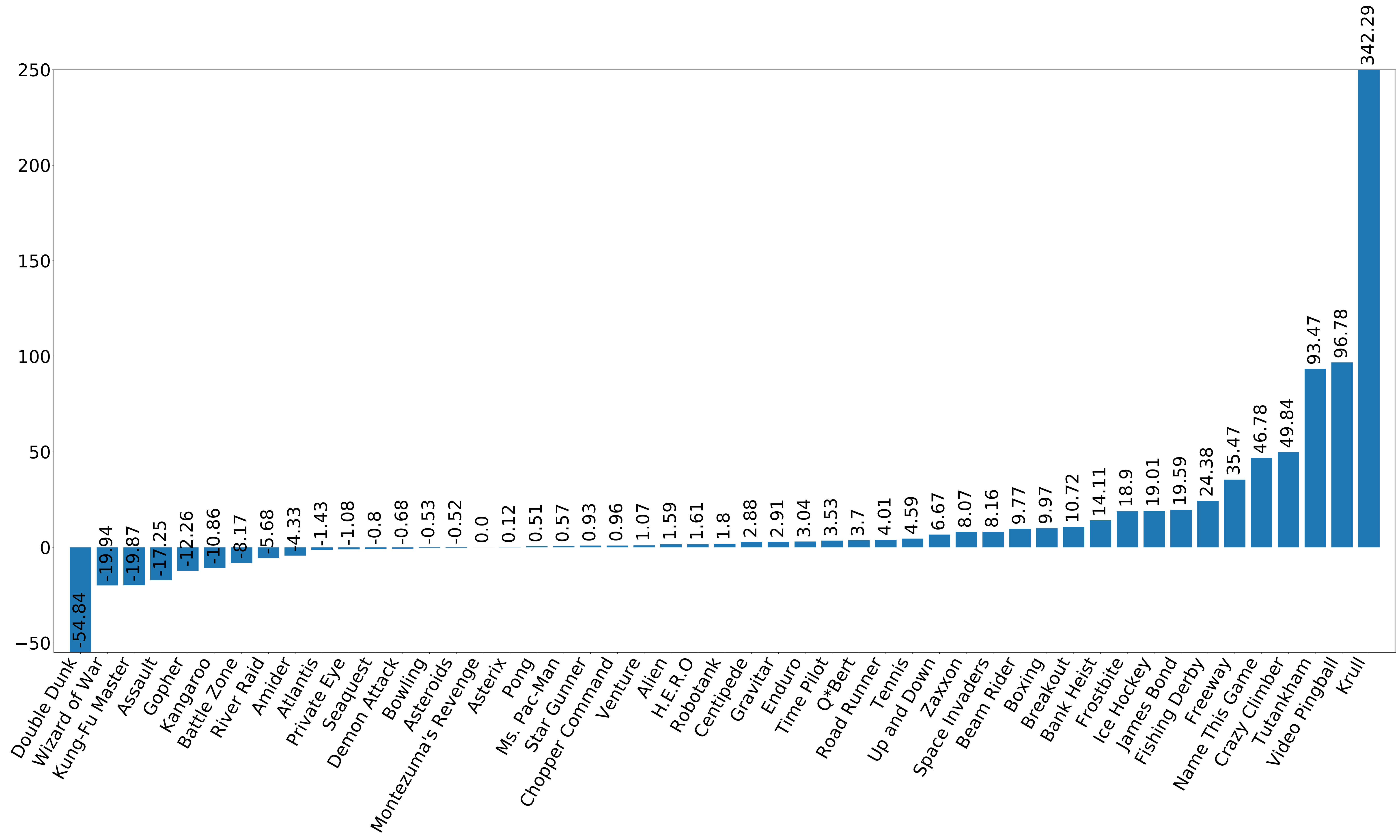}
\caption{Relative score of OB2I compared to BEBU-UCB in percents (\%).}
\vspace{-1em}
\label{fig:compare-2}
\end{figure}

\begin{figure}[!h]
\center
\includegraphics[width=5.5in]{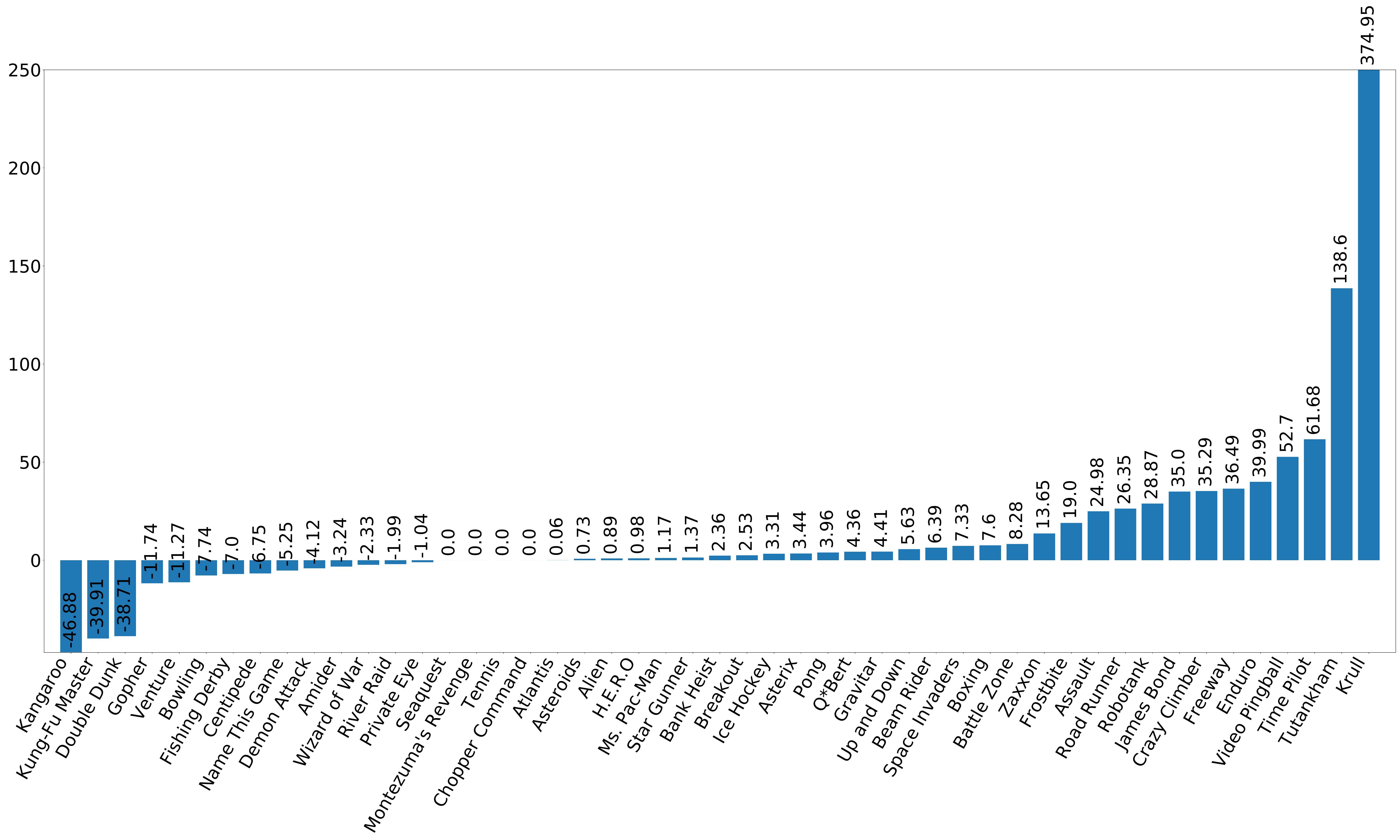}
\caption{Relative score of OB2I compared to BEBU-IDS in percents (\%).}
\label{fig:compare-3}
\end{figure}

\section{Failure Analysis}

\begin{figure}[b]
\center
\includegraphics[width=0.3\textwidth]{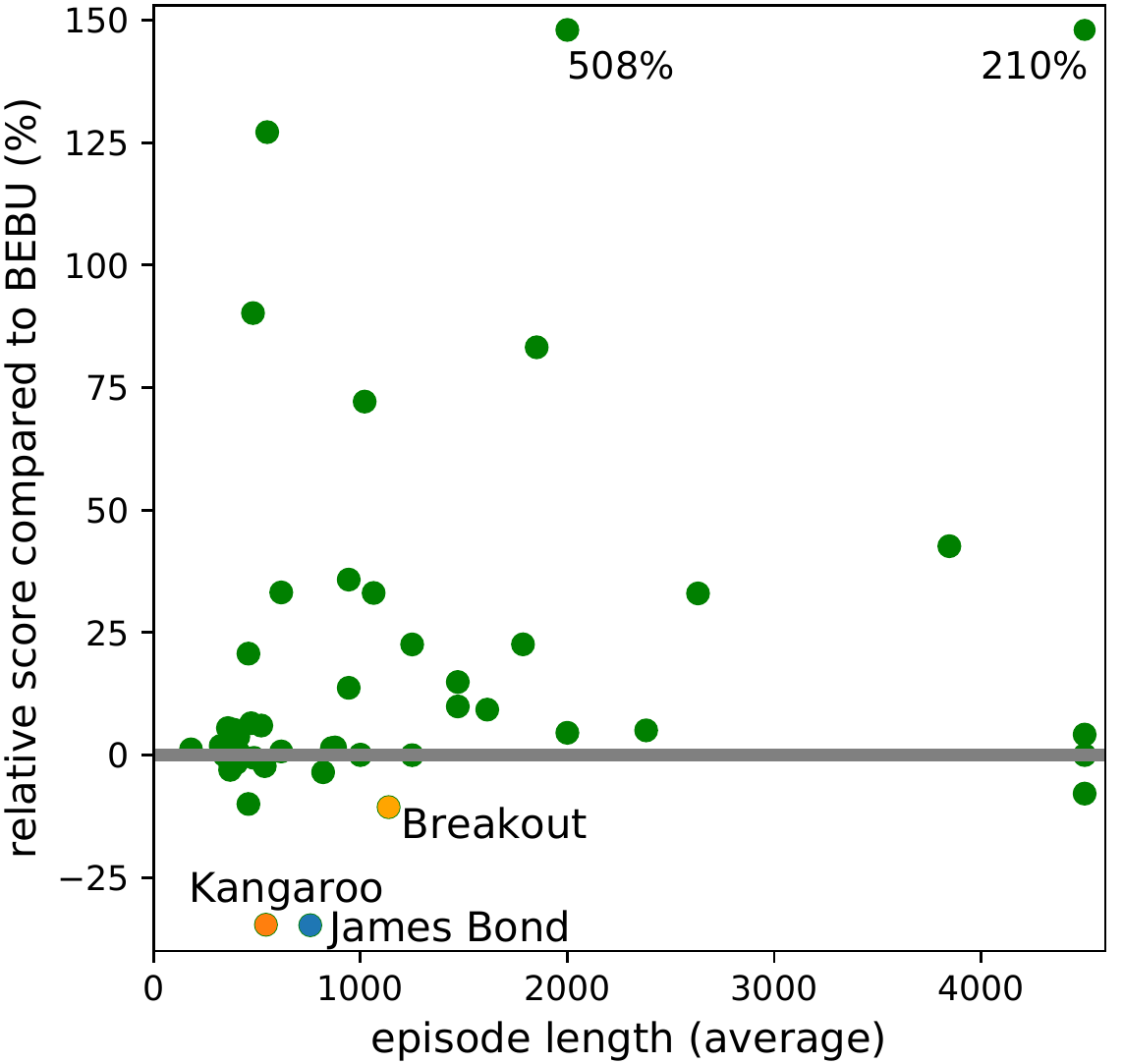}
\caption{ The relationship between Episode Length and Relative Scores}
\label{fig:failure}
\end{figure}

Our method does not have a good performance on Montezuma's Revenge (see Table \ref{tab:scores-montezuma}) because the epistemic uncertainty-based methods are not particularly tweaked for this domain. Meanwhile, IDS, NoisyNet and BEBU-based methods also fail on Montezuma's Revenge and score zero. Bootstrapped DQN achieves 100 points, which is also low and does not indicate successful learning in Montezuma's revenge. In contrast, the bonus-based methods achieve significantly higher scores on Montezuma's Revenge (e.g., RND achieves 8152 points). Nevertheless, according to \citet{bonux-2020} and Table~\ref{tab:scores-atari-sum}, NoisyNet and IDS significantly outperform several strong bonus-based methods evaluated by the mean and median scores of 49 Atari games. 

\begin{table}[!h]
\small
\caption{Comparison of scores in \emph{Montezuma's Revenge.}}
\label{tab:scores-montezuma}
\centering
\setlength{\tabcolsep}{1.3mm}{
\hspace{0.5em}
\begin{tabular}{c|cccc|cccc}
    \hline
    Frames & \multicolumn{4}{|c|}{200M} & \multicolumn{4}{|c}{20M} \cr
	\hline
    {~} & DQN & BootDQN & NoisyNet & BootDQN-IDS & BEBU & BEBU-UCB & BEBU-IDS & OB2I\cr
    \hline
    Scores & 0 & 100 & 3 & 0 & 0 & 0 & 0 & 0\cr
\hline
\end{tabular}}
\end{table}

Moreover, we find that the length of episode (or horizon) matters since OB2I propagates uncertainty within an episode. We visualize the connection between horizon and performance in Fig.~\ref{fig:failure}, where each point represents a game. We find that the games where OB2I is suboptimal typically have short horizons. In such games, propagating uncertainty does not bring much advantage, since it may be unnecessary.

Theoretically, BEBU (or BootDQN) instantiates Thompson sampling (with uninformative prior). As long as the prior is correctly specified, Thompson sampling attains the optimal Bayesian (average-case) regret. In contrast, OB2I instantiates optimism in the face of uncertainty (via UCB), which attains the optimal frequentist (worst-case) regret. In a few cases, OB2I may be overly conservative (with overly large UCB), since it aims to minimize the worst-case regret.

\section{Algorithmic Comparison}

\newcommand{\tabincell}[2]{\begin{tabular}{@{}#1@{}}#2\end{tabular}} 
\begin{table}[h]
\caption{Algorithmic comparison of the closely related works}
\hspace{0.1em}
\label{tab:alg-compare}
\centering
\begin{tabular}{l|c|c|c} 
\hline
& \tabincell{c}{Bonus or \\Posterior Variance} & Update Method & Uncertainty Characterization\\
\hline
EBU \citep{ebu-2019}  & - & backward update& -\\
Bootstrapped DQN \citep{bootstrap-2016} & bootstrapped & on-trajectory update & bootstrapped distribution\\
UBE \citep{uncer-2018} & closed form & on-trajectory update & posterior sampling\\
Bayesian-DQN \citep{bayesian-dqn} & closed form & on-trajectory update & posterior sampling\\
LSVI-UCB \citep{jin-2019} & closed form & backward update & optimism\\
BEBU (base of our work)  & bootstrapped & backward update & bootstrapped distribution\\
OB2I (ours) & bootstrapped & backward update & optimism\\ 
\hline
\end{tabular}
\end{table}

\end{document}